\newenvironment{keywords}{\footnotesize {\bf  Key words:}}{}
\newtheorem{theorem}{Theorem}
\newtheorem{definition}{Definition}
\newenvironment{proof}{{\noindent\it Proof: }}{\hfill $\square$\par}
\newcommand\email{{\bf email: }}
\setlist[enumerate]{leftmargin=.5in}
\setlist[itemize]{leftmargin=.5in}
\begin{document}
\newcommand{\R}{\mathbb{R}}
\newcommand{\N}{\mathbb{N}}
\newcommand{\Z}{\mathbb{Z}}
\newcommand{\F}{\mathbb{F}}
\newcommand{\C}{\mathbb{C}}
\newcommand{\D}{\mathbb{D}}
\newcommand{\abs}[1]{\left\vert #1 \right\vert}
\newcommand{\norm}[1]{\left\Vert #1 \right\Vert}
\renewcommand{\Re}{\text{Re}}
\renewcommand{\Im}{\text{Im}}
\newcommand{\Part}[2]{\frac{\partial #1}{\partial #2}}
\newcommand{\argmin}{\mathop{\arg\min}}
\newcommand{\argmax}{\mathop{\arg\max}}
\newcommand{\etal}{\textit{et al. }}

\title{Shape Prior Segmentation Guided by Harmonic Beltrami Signature}

\author{
    Chenran Lin\thanks{Department of Mathematics, The Chinese University of Hong Kong, Hong Kong (\email{crlin@math.cuhk.edu.hk})}
    \and
    Lok Ming Lui\thanks{Department of Mathematics, The Chinese University of Hong Kong, Hong Kong (\email{lmlui@math.cuhk.edu.hk})}
}

\maketitle

\begin{abstract}
    This paper presents a novel shape prior segmentation method guided by the Harmonic Beltrami Signature (HBS). 
    The HBS is a shape representation fully capturing 2D simply connected shapes, exhibiting resilience against perturbations and invariance to translation, rotation, and scaling.
    The proposed method integrates the HBS within a quasi-conformal topology preserving segmentation framework, leveraging shape prior knowledge to significantly enhance segmentation performance, especially for low-quality or occluded images. 
    The key innovation lies in the bifurcation of the optimization process into two iterative stages: 
    1) The computation of a quasi-conformal deformation map, which transforms the unit disk into the targeted segmentation area, driven by image data and other regularization terms; 
    2) The subsequent refinement of this map is contingent upon minimizing the $L_2$ distance between its Beltrami coefficient and the reference HBS.
    This shape-constrained refinement ensures that the segmentation adheres to the reference shape(s) by exploiting the inherent invariance, robustness, and discerning shape discriminative capabilities afforded by the HBS.
    Extensive experiments on synthetic and real-world images validate the method's ability to improve segmentation accuracy over baselines, eliminate preprocessing requirements, resist noise corruption, and flexibly acquire and apply shape priors. 
    Overall, the HBS segmentation framework offers an efficient strategy to robustly incorporate the shape prior knowledge, thereby advancing critical low-level vision tasks.
\end{abstract}

\begin{keywords}
    segmentation, shape prior, low-quality image processing, quasi-conformal, Harmonic Beltrami Signature
\end{keywords}

\section{Introduction}
\label{section: intro}

Image segmentation stands as a cornerstone within the realm of computer vision, holding a pivotal role across diverse fields. Its essence lies in the precise localization of target objects within an image. Nonetheless, challenges such as poor image quality, occluded objects, blurred boundaries, and other complexities often impede segmentation efforts. In many instances, supplementary information exists regarding the shape of the target object, extending beyond what is directly discernible in the given image, commonly referred to as shape prior information. Integrating such prior knowledge into baseline segmentation algorithms represents a promising avenue for enhancing segmentation performance.

Over the past few decades, a multitude of algorithms for the shape prior segmentation have emerged. These approaches can be roughly classified into three categories based on the type of shape priors employed:

The first class uses geometric properties to describe target objects, such as convexity \cite{luoConvexShapePrior2019,yanConvexityShapePrior2020,gorelickConvexityShapePrior2017}, local convexity \cite{siuImageSegmentationPartial2020}, star shape \cite{veksler2008star}, compactness \cite{das2009semiautomatic}, geodesic star \cite{gulshan2010geodesic}, hedgehog \cite{isack2016hedgehog}, etc. While these methods can improve the performance of baseline algorithms on specific types of shapes, they obviously utilize only a limited portion of the shape information and struggle to differentiate further images sharing similar features.

The second class adopts a more intuitive approach, where researchers directly employ the target object's boundary \cite{bressonVariationalModelObject2006,malcolmGraphCutSegmentation2007,saitoJointOptimizationSegmentation2016,chanLevelSetBased2005,eltanbolyLevelSetsbasedImage2019} or intensity \cite{freedmanInteractiveGraphCut2005,cootesStatisticalModelsAppearance2001} as the shape prior. When dealing with a large number of template images, statistical techniques like PCA have been applied to better summarize the characteristics from the training set \cite{yangParallelizableRobustImage2020,eltanbolyLevelSetsbasedImage2019,saitoJointOptimizationSegmentation2016}. These approaches benefit from the ease of collecting training data and the strong interpretability of resulting shape priors. However, these methods require additional data preprocessing during training and inference, such as alignment, registration, and intensity remapping. The design of preprocessing significantly influences the obtained shape prior and consequently affects the final segmentation results.

The third class utilizes innovative shape representations, calculated based on the given images and capturing their specific features. These shape representations serve as shape priors \cite{yeoSegmentationBiomedicalImages2014}. Undoubtedly, the performance of the segmentation algorithm is significantly influenced by the representation, and thus, selecting a powerful representation is paramount.

The HBS \cite{linHarmonicBeltramiSignature2022} utilized in this paper is an instance of such a potent representation for 2D simply connected shapes. The HBS is uniquely determined by the given shape, invariant under translation, rotation, and scaling, and robust under small perturbations of the shape. The distance between different HBS can be measured easily by the $L_2$ norm because of the concise and unified form, a complex function defined on the unit disk. Notably, the original shape can be faithfully reconstructed from its corresponding HBS, indicating the comprehensive encapsulation of shape information within the signature. With the above advantages, the HBS can be integrated into the quasi-conformal topology preserving segmentation model \cite{chanTopologyPreservingImageSegmentation2018} as a shape prior to guide the segmentation process and gain better performance, which becomes our novel HBS segmentation model.

To delve deeper, the result of the quasi-conformal segmentation model is a Beltrami coefficient, whereas the HBS is just an exceptional Beltrami coefficient. Such an intrinsic connection between the two motivates us to impose specific constraints on the quasi-conformal segmentation model, aligning the resultant Beltrami coefficient with an HBS. 
Concurrently, the boundary of the target object is extracted from the reference image, and then the corresponding HBS is computed as the shape prior. The option to utilize their average HBS is considered in scenarios where multiple images are provided. 
Subsequently, a comparison between the output Beltrami coefficient and the shape prior HBS is conducted by the $L_2$ norm, with iterative minimization to reduce their disparity.
The incorporation of the HBS into our proposed model endows it with the following benefits:
\begin{enumerate}
    \item Our model ensures a certain level of similarity between the segmentation result and the reference shapes, leading to improved segmentation performance even for low-quality images;
    \item The invariance of HBS under translation, rotation, and scaling obviates the need for image preprocessing in our proposed model;
    \item The robustness of HBS against minor shape perturbations enhances the stability and reduces the sensitivity to noise in our model;
    \item The property of HBS preserving all shape information provides our model with a strong capability to discern target objects.
\end{enumerate}

The rest of the paper is organized as follows: 
\cref{section: related work} shows some related topics about shape prior segmentation; 
\cref{section: background} introduces some theoretical background; 
\cref{section: main} explains our proposed HBS segmentation model in detail; 
\cref{section: algorithm} gives the numerical implementation; 
\cref{section: exp} reports our experimental results; 
the paper is concluded in \cref{section: conclusion}, and we point out several future directions.

\section{Related works}\label{section: related work}
\subsection{Traditional segmentation}
Active contour or deformable models provide a practical framework for object segmentation and have been widely used in image segmentation.
Kass et al. \cite{kassSnakesActiveContour1988} first introduced the active contour method, evolving a contour towards object boundaries by minimizing an energy function.
Osher et al. \cite{osher1988fronts} and Sethian et al. \cite{adalsteinsson1995fast} developed the level set method, using level sets of a higher dimensional function to enable the implicit representation of curves.
The level set method quickly attracted researchers' attention, and many related works have emerged, such as edge-based models \cite{caselles1993geometric,caselles1995geodesic,yezzi1997geometric}, region-based models \cite{mumford1989optimal,chan2001active}, and shape prior models \cite{leventon2002statistical,rousson2002shape}.
It is worth noting that Chan et al. \cite{chan2001active} enhanced the original level set model into a piecewise constant version, resulting in the well-known CV model, the first model to successfully reduce the heavy dependency on edge information.

Except for the active contour method, several baseline algorithms have been proposed.
Graph cuts \cite{boykov2001interactive,beichel2012liver,boykov2006graph} formulate image segmentation as a minimum cut or maximum flow problem on a graph, where pixels are mapped to graph nodes and edges are weighted based on pairwise pixel similarities.
The watershed transformation \cite{roerdink2000watershed,parvati2008image,cousty2008watershed} segments images by treating intensity as a topographical surface and flooding from regional minima markers.
$K$-means \cite{dhanachandra2015image,pappas1989adaptive} clustering groups pixels into a predefined number of clusters $K$ based on feature similarity to perform segmentation.
Mean shift \cite{comaniciu1999mean,tao2007color} is a nonparametric technique that finds density distribution modes to cluster pixels into homogeneous regions without predefined $K$.
The random walker algorithm \cite{grady2006random,dong2015sub} simulates random walks from manually labeled seed pixels to probabilistically label other pixels for interactive segmentation.

\subsection{Shape prior segmentation}
Utilizing shape priors to guide segmentation is a highly viable approach for enhancing the performance of baseline algorithms, especially in complex scenarios. According to how people use shape priors, these methods can be categorized into three main groups.

Geometric features have caught some researchers' attention, with convexity being one of the most popular properties among this category of shape prior segmentation methods. Gorelick et al. \cite{gorelickConvexityShapePrior2017} utilized 3-clique potentials to penalize 1-0-1 configurations on straight lines and employed an efficient iterative trust region approach for optimization. Luo et al. \cite{luoConvexShapePrior2019} proposed a model incorporating convex multi-object segmentation using the signed distance function corresponding to their union and a Gaussian mixture method. Beyond convexity, properties such as compactness \cite{das2009semiautomatic}, star shape \cite{veksler2008star}, geodesic star \cite{gulshan2010geodesic}, hedgehog \cite{isack2016hedgehog}, and elliptical shape \cite{slabaughGraphCutsSegmentation2005}, can also effectively improve the segmentation accuracy and precision.

Conversely, an image's boundary or intensity information is a valuable form of prior knowledge. Segmentation models under the level set framework can combine the shape prior distance components into their energy functions. Chan et al. 
 \cite{chanLevelSetBased2005} defined the shape distance by the Heaviside function of shapes and then put it into his CV model. Vu et al.  \cite{vuShapePriorSegmentation2008} adopted a similar approach but normalized shapes based on their 3rd-order moments to reduce differences caused by variations in position and angle. Cootes et al.  \cite{cootesStatisticalModelsAppearance2001} treated the histogram of pixel density distribution within the target region as texture information and landmarks of the target shape as boundary information to guide segmentation.

Given plenty of template images, a statistical shape model(SSM) can generate a feature space and search for feasible segmentation within it. Malcolm et al. \cite{malcolmGraphCutSegmentation2007} uses kernel PCA to learn a statistical model of relevant shapes. Nakagomi et al. \cite{nakagomi2013multi} proposed fully automated segmentation using multiple shape priors, which were selected from eigenshapes generated through the uniform sampling of the first two eigenmodes of an SSM. Saito et al. \cite{saitoJointOptimizationSegmentation2016} proposed an algorithm that allows the selection of an optimal shape among the eigenshape space generated from SSM by conducting a branch-and-bound search and then does not require the construction of a hierarchical clustering tree before graph-cut segmentation. Yeo et al. \cite{yeoSegmentationBiomedicalImages2014} incorporated statistical shape information into the Bayesian formulation of the segmentation model using nonparametric shape density distribution. Eltanboly et al. \cite{eltanbolyLevelSetsbasedImage2019} built SSM by modelling the empirical PDF for the intensity level distribution with a linear combination of Gaussians(LCG) and modified an Expectation-Maximization(EM) algorithm to handle the LCGs.

\subsection{Quasi-conformal segmentation}
The quasi-conformal theory has been widely used in the field of computer vision in recent years \cite{zhang2023deformationinvariant,lyu2024spherical,guo2023automatic,lyu2023two,zhu2022parallelizable,zhang2022unifying,zhang2022new,zhang2022nondeterministic,zhang2021quasi},
and many segmentation algorithms based on it have significantly developed. Compared with the other methods, quasi-conformal segmentation is closely related to the proposed model, sharing a similar lineage.
Chan et al. \cite{chanTopologyPreservingImageSegmentation2018} proposed a new approach using the Beltrami representation of a shape for topology-preserving image segmentation. The topology of the segmentation result can be guaranteed by the given simple template by imposing only one constraint on the Beltrami representation, which can be handled easily. Subsequently, Siu et al. \cite{siu2020image} began to focus on partial convexity in images and incorporated a registration-based segmentation model with a specially designed convexity constraint.  Zhang et al. \cite{zhang2021topology,zhangTopologyConvexitypreservingImage2021} further expanded the application domain to 3D images and proposed segmentation models with the hyperelastic regularization and convexity regularization. Zhang et al. \cite{zhang2022new} combined quasi-conformal segmentation with a deep learning framework, designed a Topology-Preserving Segmentation Network(TPSN) and applied it in the field of medical imaging, which produced reliable results even in challenging cases.

\section{Theoretical basis}\label{section: background}
\subsection{Quasi-conformal mapping and Beltrami equation}
A complex function $f: \Omega \subset \C \rightarrow \C$ is said to be \textit{quasi-conformal} associated to $\mu$ if $f$ is orientation-preserving and satisfies the following \textit{Beltrami equation}:
\begin{equation}\label{eq: beltrami eq}
    \Part{f}{\overline{z}} = \mu(z) \Part{f}{z}
\end{equation}
where $\mu(z)$ is a complex-valued Lebesgue measurable function satisfying $\norm{\mu}_\infty < 1$. More specifically, this $\mu: \Omega \rightarrow \D$ is called the \textit{Beltrami coefficient} of $f$
\begin{equation}\label{eq: mu def}
    \mu = \frac{f_{\overline{z}}}{f_{z}}
\end{equation}

In terms of the metric tensor, consider the effect of the pullback under $f$ of the Euclidean metric $ds^2_E$. The resulting metric is given by:
\begin{equation}
    f^*(ds^2_E) = \abs{\Part{f}{z}}^2 \abs{dz + \mu(z)d\overline{z}}^2
\end{equation}
which, relative to the background Euclidean metric $dz$ and $d\overline{z}$, has eigenvalue $(1+\abs{\mu})^2 \abs{\Part{f}{z}}^2$ and $(1-\abs{\mu})^2 \abs{\Part{f}{z}}^2$.

Therefore, inside the local parameter domain around some point $p$, $f$ can be considered as a map composed of a translation to $f(p)$ together with the multiplication of a stretch map $S(z) = z + \mu(p)\overline{z}$ and conformal function $f_z(p)$, which may be expressed as follows:
\begin{equation}\label{eq: local f}
    f(z) =  f(p)+S(z)f_z(p) = f(p)+(z+\mu(p)\overline{z})f_z(p).
\end{equation}
$S(z)$ makes $f$ map a small circle to a small ellipse, and all the conformal distortion of $f$ is caused by $\mu$. To form $\mu(p)$, we can determine the angles of the directions of maximal magnification and shrinkage and the amount of them as well. Specially, the angle of maximal magnification is $\arg(\mu(p))/2$ with magnifying factor $1+\abs{\mu(p)}$; the angle of maximal shrinkage is the orthogonal angle $\arg(\mu(p))/2 - \pi/2$ with shrinkage factor $1-\abs{\mu(p)}$. The distortion or dilation is given by:
\begin{equation}
    K = \frac{1+\abs{\mu(p)}}{1-\abs{\mu(p)}}.
\end{equation}
Thus, the Beltrami coefficient $\mu$ gives us important information about the properties of the map (see \cref{fig: qc}), and $\mu$ is a measure of non-conformality. In particular, the map $f$ is conformal around a small neighborhood of $p$ when $\mu(p)=0$ and if $\mu(z)=0$ everywhere on $\Omega$, $f$ us called \textit{conformal} or \textit{holomorphic} on $\Omega$.

\begin{figure}
    \begin{center}
        \includegraphics[width=7.6cm]{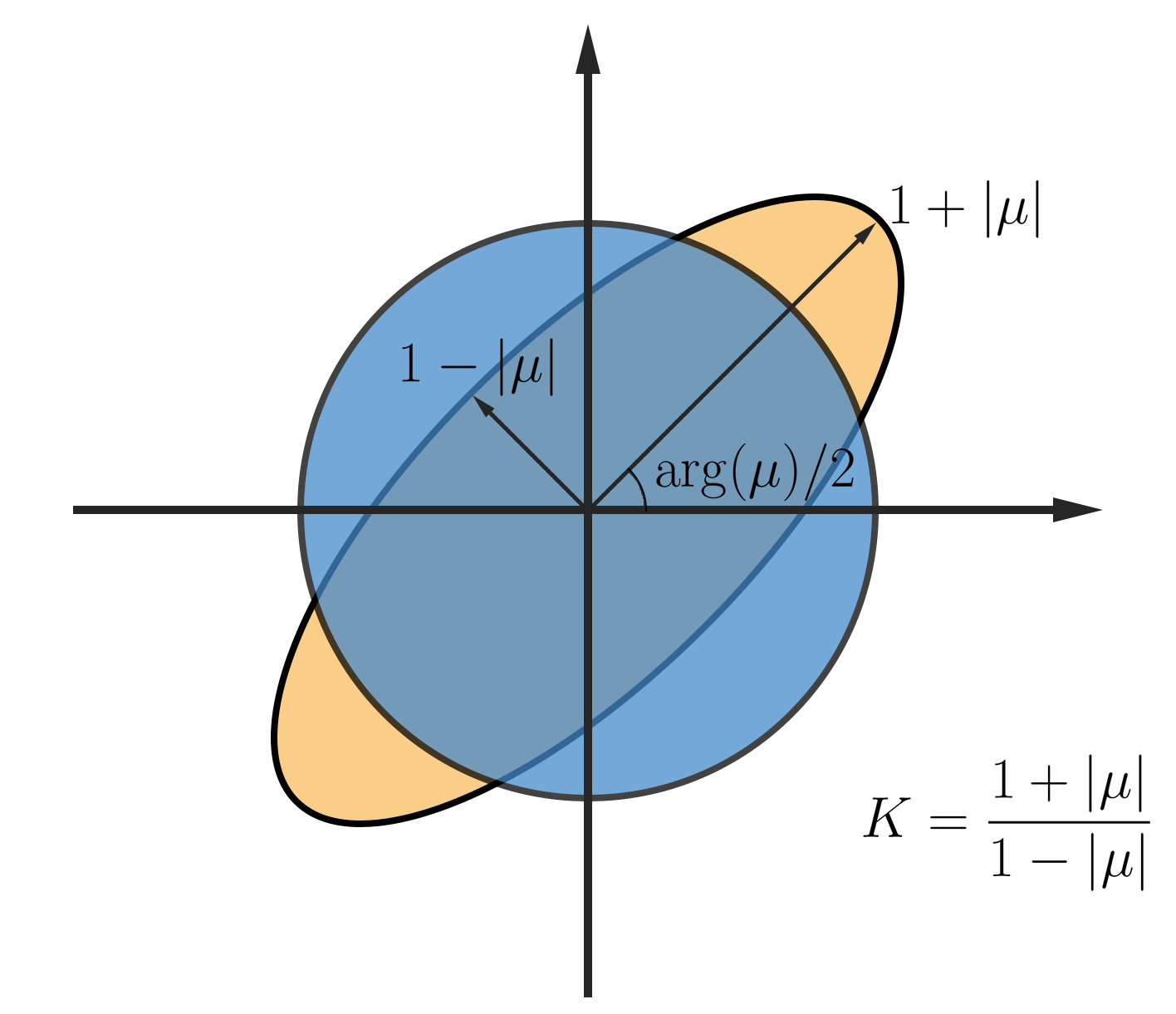}
    \end{center}
    \caption{Quasi-conformal maps infinitesimal circles to ellipses. The Beltrami coefficient measures the distortion or dilation of the ellipse under the QC map.}
    \label{fig: qc}
\end{figure}

Note that there is a one-to-one correspondence between the quasi-conformal mapping $f$ and its Beltrami coefficient $\mu$. Given $f$, there exists a Beltrami coefficient $\mu$ satisfying the Beltrami equation by equation \cref{eq: mu def}. Conversely, the following theorem states that given an admissible Beltrami coefficient $\mu$, a quasi-conformal mapping $f$ always exists associated with this $\mu$.

\begin{theorem}[Measurable Riemannian Mapping Theorem]\label{thm: Measurable Riemannian Mapping Theorem}
    Suppose $\mu: \C \rightarrow \C$ is Lebesgue measurable satisfying $\norm{\mu}_\infty <1$; then, there exists a quasi-conformal homeomorphism $f$ from $\C$ onto itself, which is in the Sobolev space $W_{1,2}(\C)$ and satisfies the Beltrami equation in the distribution sense. The associated quasi-conformal homeomorphism $f$ is unique up to a Mobi\"us transformation. Furthermore, by fixing $0$, $1$ and $\infty$, the $f$ is uniquely determined.
\end{theorem}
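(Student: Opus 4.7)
The plan is to follow the classical Ahlfors-Bers construction via the Beurling transform, handling first the case of compactly supported $\mu$ and then extending to the general case by normalized approximation.

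First, I would reduce to the case where $\mu$ has compact support, say supported in a disk $B_R$. Write the desired map in the form $f(z) = z + g(z)$; substitution into the Beltrami equation yields $\Part{g}{\overline{z}} - \mu \Part{g}{z} = \mu$. Introduce the Beurling-Ahlfors transform $T$, defined on Schwartz functions by $T(\Part{\phi}{\overline{z}}) = \Part{\phi}{z}$; by Calder\'on-Zygmund theory $T$ extends to a bounded operator on $L^p(\C)$ with operator norm $\norm{T}_p \to 1$ as $p \to 2$. Setting $h := \Part{g}{\overline{z}} \in L^p$, the equation becomes $(I - \mu T)h = \mu$. Since $\norm{\mu}_\infty < 1$, one can choose $p>2$ sufficiently close to $2$ so that $\norm{\mu}_\infty \norm{T}_p < 1$, and the Neumann series produces a solution $h = \sum_{k \ge 0}(\mu T)^k \mu$ in $L^p(\C)$. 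Recovering $g$ by the Cauchy transform $g(z) = -\frac{1}{\pi}\int_\C \frac{h(\zeta)}{\zeta - z}\, dA(\zeta)$ and checking distributional derivatives gives $f \in W_{1,2}(\C)$ solving \cref{eq: beltrami eq}. That $f$ is a homeomorphism of $\C$ onto itself follows from the H\"older continuity estimate for Cauchy transforms with $L^p$ density ($p>2$) together with a winding-number argument using the positivity of the Jacobian $\abs{f_z}^2(1-\abs{\mu}^2)$.

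For general $\mu$ without compact support, I would truncate to $\mu_n = \mu \cdot \chi_{\abs{z}\le n}$, build maps $f_n$ normalized to fix $0,1,\infty$, and extract a locally uniformly convergent subsequence using uniform H\"older equicontinuity of quasi-conformal maps with a common dilation bound. Weak convergence of the derivatives in $L^p_{\mathrm{loc}}$ transfers the Beltrami equation to the limit $f$. For uniqueness, suppose $f_1$ and $f_2$ both solve the equation with the same $\mu$; the composition formula for Beltrami coefficients shows that $F := f_2 \circ f_1^{-1}$ has Beltrami coefficient $0$ a.e., so by Weyl's lemma $F$ is a conformal self-homeomorphism of $\C$, hence a M\"obius transformation, and in fact of the form $az+b$ since it fixes $\infty$. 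Imposing $F(0)=0$ and $F(1)=1$ then forces $F=\mathrm{id}$.

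The principal analytic obstacle is establishing the sharp $L^p$ mapping properties of the Beurling transform: $T$ is a singular integral with Fourier multiplier $\overline{\xi}/\xi$, and while the $L^2$ isometry is immediate from Plancherel, the continuity of $\norm{T}_p$ at $p=2$ requires Calder\'on-Zygmund theory combined with Riesz-Thorin interpolation. This step is what permits the Neumann series to converge in some $L^p$ with $p>2$, which in turn is exactly the integrability needed to promote $g$ from a distributional solution to a H\"older continuous function, and ultimately to a genuine homeomorphism. A secondary but more technical subtlety is the passage to the limit in the non-compactly-supported case, where one must verify that the normalization at $0,1,\infty$ actually selects a compact family out of the sequence $\{f_n\}$ rather than allowing mass to escape.
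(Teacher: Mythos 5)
The paper does not prove this statement: it is quoted as the classical Measurable Riemann Mapping Theorem (Morrey, Ahlfors--Bers, Bojarski) and used as black-box background for the HBS construction, so there is no in-paper argument to compare against. Your proposal is the standard singular-integral proof of that classical theorem, and its outline is correct: the reduction $f = z + g$, the equation $(I - \mu T)h = \mu$ for $h = g_{\bar z}$, the Neumann series in $L^p$ for $p>2$ close to $2$ using the continuity of $\norm{T}_p$ at $p=2$ (Riesz--Thorin from the $L^2$ isometry of the multiplier $\overline{\xi}/\xi$ plus Calder\'on--Zygmund bounds), recovery of $g$ by the Cauchy transform with the H\"older estimate, the truncation-and-normal-families passage to non-compactly-supported $\mu$, and uniqueness via Weyl's lemma applied to $f_2 \circ f_1^{-1}$. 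Two spots deserve more care than your sketch gives them. First, the homeomorphism step: positivity of the Jacobian $\abs{f_z}^2(1-\abs{\mu}^2)$ presupposes $f_z \neq 0$ a.e., which is itself a nontrivial consequence of the construction (e.g.\ via the exponential representation $f_z = e^{Th}\cdot(\text{nonvanishing factor})$ for smooth data, or by quoting that a nonconstant quasiregular map has a.e.\ nonvanishing Jacobian), and a.e.\ positive Jacobian alone does not give injectivity for a $W^{1,p}$ map --- the clean route is to approximate $\mu$ by smooth coefficients, for which the solutions are genuine diffeomorphisms, and pass to a locally uniform limit, or to exhibit an explicit inverse by solving a second Beltrami equation. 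Second, the uniqueness step uses the chain rule and the composition formula for Beltrami coefficients of Sobolev homeomorphisms; this is legitimate for quasiconformal maps because they satisfy Lusin's condition (N) and are differentiable a.e.\ with nonvanishing Jacobian, but that justification should be stated rather than assumed. With those two points filled in, the argument is the complete classical proof of the theorem as stated, including the normalization at $0$, $1$, $\infty$.
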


\begin{figure}
    \begin{center}
        \includegraphics[width=\textwidth]{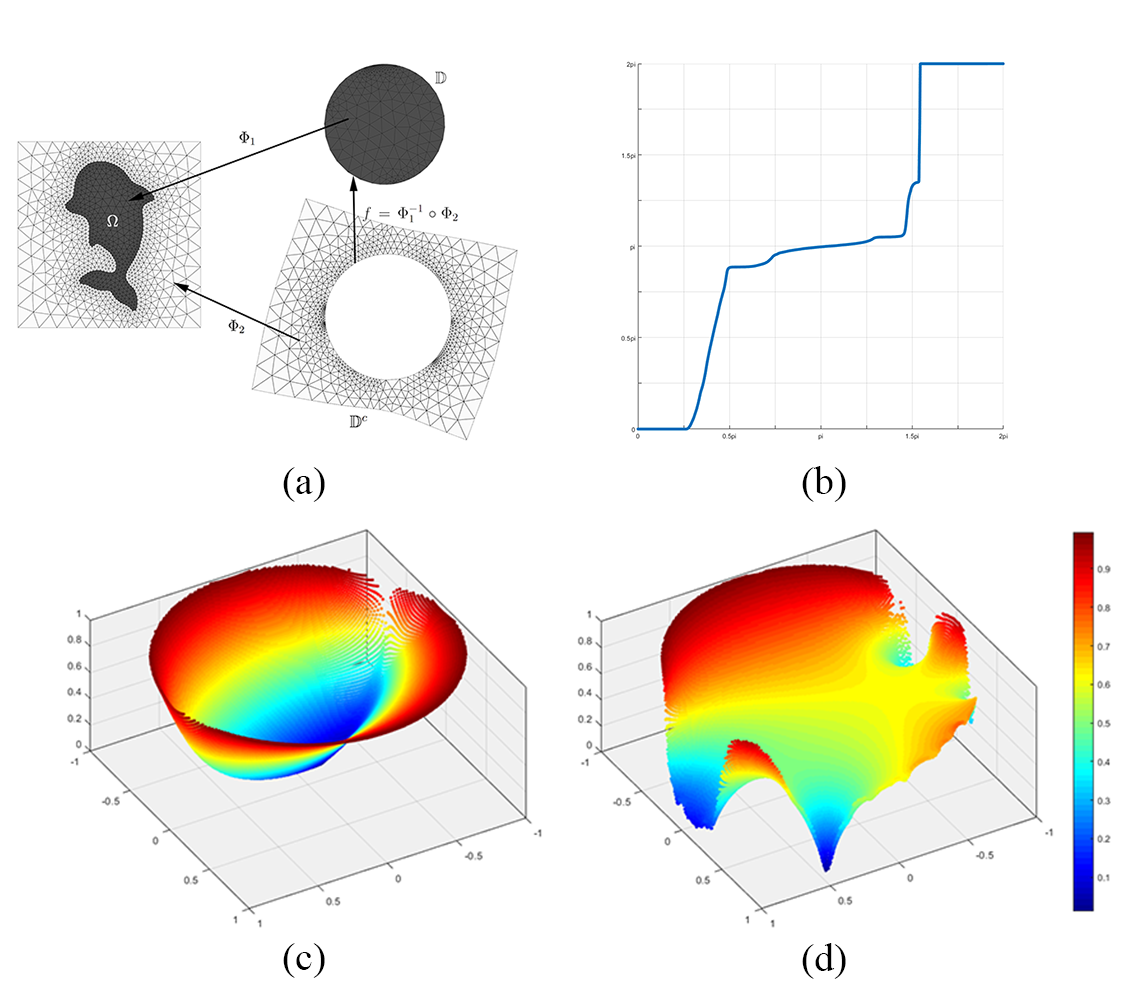}
    \end{center}
    \caption{The illustration of HBS. (a) shows the shape $\Omega$ and conformal maps $\Phi_1$ and $\Phi_2$; (b) is the conformal welding $f = \Phi_1^{-1} \circ \Phi_2$; (c) is the Harmonic extension $H$ of conformal welding $f$; (d) is the GHBS $B$ corresponding to $H$.}
    \label{fig: ghbs}
\end{figure}

\subsection{HBS}
Suppose $\Omega\subset \mathbb{C}$ is a Jordan domain and a quasicircle. Let $f = \Phi_1^{-1} \circ \Phi_2$ be the conformal welding of $\Omega$, where  $\Phi_1: \D \rightarrow \Omega$ and $\Phi_2: \D^c \rightarrow \Omega^c$ are the conformal mappings. The harmonic extension $H: \overline{\D} \to \C$ of $f$ is achieved by the Poisson integral on the unit circle. Then the Beltrami coefficient of $H$ is called \textit{Generalized Harmonic Beltrami Signature (GHBS)}
\begin{equation*}
    B(z):= \mu_H(z) = \frac{H_{\bar{z}}(z)}{H_z(z)}.
\end{equation*}
The process of obtaining GHBS is shown in \cref{fig: ghbs}.

When the $\Phi_2$ of GHBS satisfies $\Phi_2(\infty) = \infty$, we call this GHBS fixed at infinity, and then the equivalence relation of GHBS can be defined as
\begin{definition}
    Suppose two GHBS $B$ and $\tilde{B}$ are fixed at infinity, they are said to be {\it equivalent} if $B=\mu_H$ and $\tilde{B} = \mu_{\tilde{H}}$, where $H$ and $\tilde{H}$ are respectively the harmonic extensions of a diffeomorphism $f:\mathbb{S}^1\to \mathbb{S}^1$ and $\tilde{f} = M_1^{-1} \circ f \circ M_2$ with $M_1$ is a Mobi\"us transformation and $M_2$ is a rotation. In this case, we denote $B \sim \tilde{B}$. Also, the equivalence class of $B$ is denoted by $[B]$.
\end{definition}

We consider the space of GHBS equivalence classes $\mathcal{B} = \mathcal{B}_0 \,/ \sim$ to study the quotient space of shapes $\mathcal{S} = \mathcal{S}_0 \, / \sim_{\mathcal{S}}$, where $\mathcal{B}_0 = \{B:\mathbb{D}\to \mathbb{D} \mid B \text{ is a GHBS fixed at infinity} \}$, $\mathcal{S}_0 = \{\Omega \subset \C \mid \Omega \text{ is Jordan domain}\}$, $\Omega \sim_{\mathcal{S}} \bar{\Omega}$ iff $\bar{\Omega} = F(\Omega)$ and $F$ is composed of translation, rotation and scaling. The following theorem illustrates that GHBS is an effective representation.

\begin{theorem}\label{thm: one to one equivalence class}
    There is a one-to-one correspondence between $\mathcal{B}$ and $\mathcal{S}$. In particular, given $[B]\in \mathcal{B}$, its associated shape $\Omega$ can be determined up to a translation, rotation, and scaling.
\end{theorem}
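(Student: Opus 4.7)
The plan is to construct the bijection explicitly by defining a forward map $\Psi: \mathcal{S}\to\mathcal{B}$ sending $[\Omega]$ to $[B_\Omega]$, where $B_\Omega$ is the GHBS produced by the construction in \cref{fig: ghbs}, and then to check well-definedness, injectivity, and surjectivity separately.

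For well-definedness, I would take any similarity $F(z)=\alpha z+\beta$ with $\alpha\neq 0$ and set $\bar\Omega=F(\Omega)$. Since $F$ fixes $\infty$, the compositions $\bar\Phi_1:=F\circ\Phi_1$ and $\bar\Phi_2:=F\circ\Phi_2$ are admissible conformal maps for $\bar\Omega$, and an arbitrary admissible choice differs from these by $\bar\Phi_1'=\bar\Phi_1\circ M$ (with $M$ a M\"obius automorphism of $\D$) and $\bar\Phi_2'=\bar\Phi_2\circ R$ (with $R$ a rotation, the only automorphisms of $\D^c$ fixing $\infty$). A direct cancellation gives $\bar f'=(\bar\Phi_1')^{-1}\circ\bar\Phi_2'=M^{-1}\circ f\circ R$, which is exactly the relation defining GHBS equivalence, so $B_{\bar\Omega}\sim B_\Omega$.

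For injectivity, suppose $B_\Omega\sim B_{\bar\Omega}$, so the underlying weldings satisfy $\bar f=M^{-1}\circ f\circ R$. Because each welding $f$ comes from the boundary values of the harmonic extension $H$, which is a quasi-conformal homeomorphism of $\overline{\D}$ by \cref{thm: Measurable Riemannian Mapping Theorem}, the map $f$ is quasi-symmetric, and the classical conformal welding theorem reconstructs a Jordan quasicircle uniquely up to M\"obius transformations of the Riemann sphere $\hat\C$. Both normalizations $\Phi_2(\infty)=\bar\Phi_2(\infty)=\infty$ force this residual M\"obius ambiguity to lie in the stabilizer of $\infty$, which is precisely the affine similarity group $z\mapsto\alpha z+\beta$; hence $\bar\Omega=F(\Omega)$ for some similarity $F$, so $[\Omega]=[\bar\Omega]$ in $\mathcal{S}$. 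Surjectivity is then immediate: given $[B]\in\mathcal{B}$, by definition of $\mathcal{B}_0$ there is a harmonic extension $H$ of some diffeomorphism $f$ of $\mathbb{S}^1$ with $\mu_H=B$, and feeding this quasi-symmetric $f$ into the conformal welding theorem produces a Jordan domain $\Omega$ with $\Phi_1^{-1}\circ\Phi_2=f$, whence $B_\Omega=B$ by construction.

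The main obstacle is the injectivity step. The delicate point is not the existence of the welded curve but tracking the ambiguity: the conformal welding theorem a priori only determines the shape up to the full six-dimensional M\"obius group of $\hat\C$, which is strictly larger than the four-dimensional similarity group appearing in $\sim_\mathcal{S}$. The work lies in showing that the two normalizations $\Phi_2(\infty)=\infty$ and $\bar\Phi_2(\infty)=\infty$, together with the specific form $\bar f=M^{-1}\circ f\circ R$ of the allowed equivalence (with $R$ a rotation, not a general M\"obius auto of $\D^c$), cut the ambiguity down exactly to the similarity group. This calibration between the definition of $\sim$ on GHBS and the definition of $\sim_\mathcal{S}$ on shapes is what the theorem is really asserting, and everything else is bookkeeping around the two welding constructions.
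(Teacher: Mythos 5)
The paper does not actually prove this theorem; it is quoted as background from the cited HBS paper \cite{linHarmonicBeltramiSignature2022}, so there is no in-paper proof to compare against. That said, your outline is essentially the standard argument that the cited source uses: the forward map is well defined because the only freedom in the welding is $\Phi_1\mapsto\Phi_1\circ M$ and $\Phi_2\mapsto\Phi_2\circ R$ (which is exactly the GHBS equivalence), injectivity comes from the uniqueness half of conformal welding for quasicircles plus the observation that the normalization at $\infty$ reduces the residual M\"obius ambiguity to the affine similarity group, and surjectivity comes from the existence half of conformal welding applied to the quasisymmetric circle diffeomorphism underlying $B$. Your identification of the calibration between $\sim$ and $\sim_{\mathcal{S}}$ as the real content is accurate.

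One step is genuinely under-justified. The equivalence $B_\Omega\sim B_{\bar\Omega}$ is a statement about the Beltrami coefficients $\mu_H$, $\mu_{\bar H}$, not directly about the weldings, so before you can write ``the underlying weldings satisfy $\bar f=M^{-1}\circ f\circ R$'' you must show that the signature determines the harmonic extension, and hence its boundary diffeomorphism, essentially uniquely. \cref{thm: Measurable Riemannian Mapping Theorem} only gives uniqueness of a quasiconformal self-map of $\D$ with prescribed Beltrami coefficient up to post-composition with an arbitrary M\"obius automorphism of $\D$, and post-composing a Euclidean-harmonic map with a non-affine disk automorphism destroys harmonicity; one has to argue that within that M\"obius orbit only rotations preserve the harmonic-extension structure, so that the ambiguity in $f$ is absorbed into $M_1$. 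Relatedly, the quasiconformality of the Poisson extension $H$ of a circle diffeomorphism is not a consequence of \cref{thm: Measurable Riemannian Mapping Theorem}; it rests on the Rad\'o--Kneser--Choquet theorem together with a dilatation bound for smooth boundary data. Neither point breaks the proof, but both need to be supplied for the injectivity step to close.
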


The \textit{Harmonic Beltrami Signature (HBS)} $B$ is the unique representative of a GHBS equivalence class, which satisfies the following conditions:
\begin{gather}
    \int_{\partial \Omega} \Phi_1^{-1}(z) dz = 0,\\
    \label{eq: arg integral B is 0}\arg \int_\D B(z) dz = 0,\\
    \label{eq: arg in 0 and pi}\arg \int_\D \frac{B(z)}{z} dz \in [0, \pi).
\end{gather} 

Given shape $\Omega$, its HBS $B$ is uniquely determined and is invariant under rotation, translation, and scaling. The following theorem guarantees the geometric stability of HBS:
\begin{theorem}\label{thm: stability of HBS}
    Let $B_1$, $B_2$ be two HBS and $\Omega_1$ and $\Omega_2$ be the normalized shapes associated to $B_1$ and $B_2$ respectively. If
    $||B_1 - B_2||_{\infty} < \epsilon$, then the Hausdorff distance between $\Omega_1$ and $\Omega_2$ satisfies
    \begin{equation*}
        d_H(\Omega_1,\Omega_2)
        = \max \left(
        \max_{q \in \Omega_2} \min_{p \in \Omega_1} \abs{p-q},
        \max_{p \in \Omega_1} \min_{q \in \Omega_2} \abs{p-q}
        \right)
        < \frac{2M}{\pi}\epsilon
    \end{equation*}
    for some $M>0$.
\end{theorem}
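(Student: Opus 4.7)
The plan is to factor the correspondence $B \mapsto \Omega$ from \cref{thm: one to one equivalence class} through two intermediate objects---the harmonic extension $H$ and the welding $f$---and to establish a Lipschitz estimate for each stage. Schematically,
\begin{equation*}
    B_i \longmapsto H_i \longmapsto f_i := H_i|_{\mathbb{S}^1} \longmapsto \partial\Omega_i,
\end{equation*}
the first arrow inverting $\mu_H$ (well defined up to Möbius ambiguity by \cref{thm: Measurable Riemannian Mapping Theorem}), the second being boundary restriction, and the third solving the conformal welding problem $f_i = \Phi_{1,i}^{-1}\circ \Phi_{2,i}$ to return $\partial\Omega_i = \Phi_{1,i}(\mathbb{S}^1)$. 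The HBS normalization conditions together with \cref{eq: arg integral B is 0} and \cref{eq: arg in 0 and pi} pin each stage down uniquely, so composing the three Lipschitz constants will produce the advertised constant $M$.

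For the first stage, I would write $H_i = F_i + \overline{G_i}$ with $F_i, G_i$ holomorphic on $\D$; the Beltrami equation becomes $\overline{G_i'} = B_i F_i'$, and the difference $\delta := H_1 - H_2$ obeys
\begin{equation*}
    \delta_{\bar z} - B_1\,\delta_z = (B_1 - B_2)\,(H_2)_z,
\end{equation*}
with source of $L^\infty$ norm at most $\epsilon\,\|(H_2)_z\|_\infty$. Applying the Cauchy transform on $\D$ together with Ahlfors--Bers stability estimates for Beltrami equations, and using the normalization to kill the Möbius ambiguity, I expect $\|H_1 - H_2\|_\infty \leq C_1\epsilon$ on $\overline{\D}$, with $C_1$ depending only on $\|B_i\|_\infty < 1$. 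Restriction to $\mathbb{S}^1$ is free, so $\|f_1 - f_2\|_\infty \leq C_1\epsilon$ as well.

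The third stage is quantitative stability of conformal welding. Here I would exploit that both $f_i$ are smooth quasi-symmetric homeomorphisms---being restrictions of harmonic diffeomorphisms---to argue that a perturbation of size $\|f_1 - f_2\|_\infty$ in the welding translates, through the pair of Riemann maps normalized as in the HBS definition, into a uniform perturbation of size $C_2\|f_1-f_2\|_\infty$ of the boundary parametrizations $\Phi_{1,i}$. Matching points via $q = \Phi_{1,2}(\zeta)$ and $p = \Phi_{1,1}(\zeta) \in \partial\Omega_1$ then gives $|p-q| \leq C_1 C_2\epsilon$; symmetry yields the full Hausdorff bound $d_H(\Omega_1,\Omega_2) \leq C_1 C_2\epsilon$. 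The numerical factor $2/\pi$ arises naturally when converting the interior Cauchy-transform bound to a boundary $L^\infty$ bound and is absorbed into $M := (\pi/2)C_1 C_2$.

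The hard part is the third stage, because conformal welding in general is merely continuous---not Hölder of any positive exponent---so a Lipschitz estimate is impossible without extra structure. The rigidity here comes from two sources: the HBS normalization conditions, which eliminate the Möbius and rotational freedom in the welding factorization, and the harmonicity of $H_i$, which forces $f_i$ to be much more regular than a generic quasi-symmetric homeomorphism. Concretely, I would linearize the welding operator around $\Omega_2$ and close the estimate by a Banach contraction argument; this is also where $M$ inherits its (suppressed) dependence on the underlying shape.
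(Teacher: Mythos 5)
You should first be aware that this paper does not actually prove \cref{thm: stability of HBS}; it is imported from \cite{linHarmonicBeltramiSignature2022}, and the intended route is visible in the surrounding text. The shape is recovered from $B$ by a \emph{single} global Beltrami equation: extend $B$ by zero to $\D^c$ and solve for the normalized quasiconformal homeomorphism $G:\C\to\C$ with $\mu_G=B$ on $\D$ and $\mu_G=0$ on $\D^c$, so that $\Omega=G(\D)$. Stability is then one application of the Ahlfors--Bers/Bojarski dependence of the normalized solution on its coefficient (the same ``Beltrami holomorphic flow'' this paper invokes in the proof of \cref{thm: existence}): the variation of $G$ under the perturbation $B_1-B_2$, supported in $\D$, is given by an explicit integral representation whose kernel carries the factor $1/\pi$, yielding $\sup_z\abs{G_1(z)-G_2(z)}\le \frac{2M}{\pi}\norm{B_1-B_2}_\infty$ with $M$ bounding the kernel integral; the Hausdorff estimate follows by matching $p=G_1(\zeta)$ with $q=G_2(\zeta)$. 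No welding is ever inverted.

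Your factorization $B\mapsto H\mapsto f\mapsto\partial\Omega$ is a genuinely different route, and its third stage is a genuine gap, as you half-concede yourself. Quantitative stability of conformal welding --- passing from a sup-norm perturbation of $f=\Phi_1^{-1}\circ\Phi_2$ to a sup-norm perturbation of $\partial\Omega$ with a Lipschitz constant --- fails for general quasisymmetric weldings (welding is merely continuous, not H\"older), and ``linearize the welding operator and close by Banach contraction'' is a plan rather than a proof: nothing in the HBS normalization conditions by itself supplies invertibility of that linearization with a controlled norm, and the extra regularity of $f_i$ coming from harmonicity is asserted, not quantified. Your first stage also silently requires $\norm{(H_2)_z}_\infty<\infty$ and $\norm{B_i}_\infty$ bounded away from $1$; this is tolerable because the theorem only claims ``for some $M>0$'' (so $M$ may depend on the shapes), but it should be stated. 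The repair is not to fix stage three but to bypass it entirely via the reconstruction map $G$ described at the end of the HBS subsection, which reduces the whole theorem to the single coefficient-dependence estimate that your stage one already contains.
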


Besides, the effective reconstruction algorithm from an HBS $B$ to the original shape $\Omega$ is also proposed and proved in \cite{linHarmonicBeltramiSignature2022}. This algorithm constructs a function $G: \C \rightarrow \C$ such that $G(\D) = \Omega$ and the Beltrami coefficient of $G$ is
$$
\mu_G = \begin{cases}
    B \text{ on } \D,\\
    0 \text{ on } \D^c.
\end{cases}
$$

\section{Proposed segmentation model}\label{section: main}
This section describes our proposed shape prior segmentation model guided by HBS.

\subsection{quasi-conformal topology preserving segmentation model}
Recall the quasi-conformal topology preserving segmentation proposed by Chan \cite{chanTopologyPreservingImageSegmentation2018}, which is the fundamentation of our HBS segmentation model. Suppose $D, D' \in \C$ are two regions, $I: D' \rightarrow \R$ is an image, and $J: D \rightarrow \R$ is a binary template image. $J$ is called the \textit{topological template image} defined as
\begin{equation}
    J(x) = \begin{cases}
        1, & x \in R,             \\
        0, & x \in D \setminus R,
    \end{cases}
\end{equation}
where $R \subset D$ is the object region of $J$. The key idea of this segmentation model is to find a quasi-conformal mapping $f_\mu: D \rightarrow D'$ which deforms the template $J$ to make it more similar to the given image $I$, then the segmented target region is $\Omega = f_\mu(R)$. Note that in most cases, we choose the rectangle domain $D = D'$.

This model can be formulated as the following energy functional:
\begin{equation}\label{eq: bc seg model}
    \min_{\mu} E_{\text{BC}}(\mu, c_1, c_2) = \min_{\mu} \int_D (I \circ f_\mu - J_{c_1,c_2})^2 + \alpha \abs{\mu}^2 + \beta \abs{\nabla \mu}^2 + \gamma \abs{u}^2 + \delta \abs{\nabla u}^2,
\end{equation}
where $u = f_\mu - Id$, $\mu$ is the Beltrami coefficient of $f_\mu$,  $\alpha, \beta, \gamma, \delta \ge 0$ are weight parameters and $J_{c_1, c_2}$ is a generalized template image defined as
\begin{equation}
    J_{c_1, c_2}(x) = \begin{cases}
        c_1, & x \in R,             \\
        c_2, & x \in D \setminus R.
    \end{cases}
\end{equation}
The first term of $E_{\text{BC}}$ measures the difference between the deformed image $I \circ f_\mu$ and the template image $J_{c_1, c_2}$. The second term limits the magnitude of $\mu$ since $\abs{\mu} < 1$ if and only if $f_\mu$ is quasi-conformal, which guarantees that $f_\mu$ is a diffeomorphism and that $\Omega$ and $R$ are the same under topological sense. The last three terms are the regularization terms controlling the smoothness of $f_\mu$. \cref{fig: hbs seg} shows an example of the segmentation process and gives intuitive views of the variables of this model.

\begin{figure}
    \begin{center}
        \includegraphics[width=\textwidth]{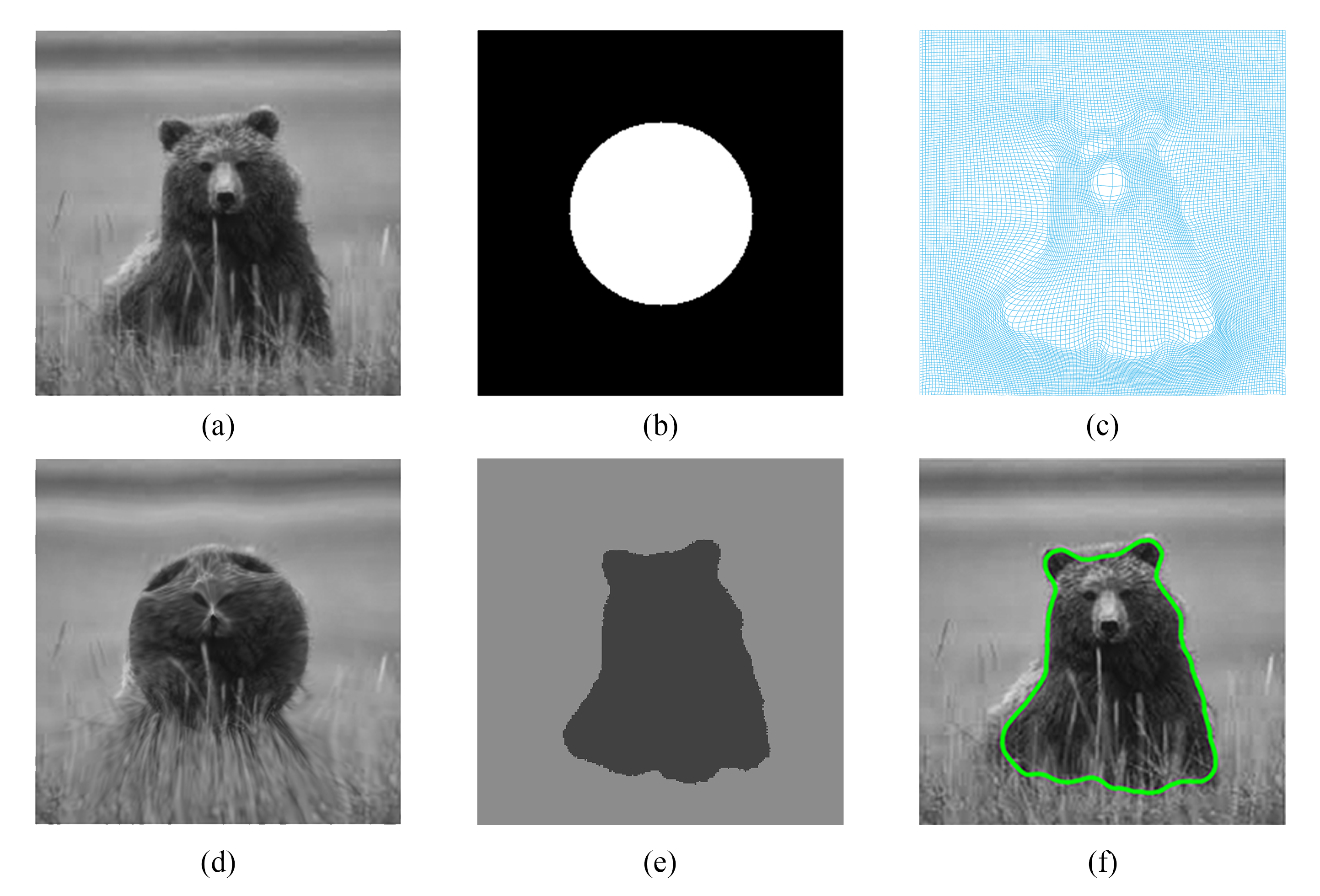}
    \end{center}
    \caption{(a) is the original image $I$; (b) is the initial template $J_{1,0}$; (c) is the deform map $f_\mu$, which transforms a standard square grid into the illustrated grid; (d) is $I \circ f_\mu$, we can find that the bear is almost put into the unit disk; (e) is deformed template $J_{c_1,c_2} \circ f_\mu^{-1}$, where $c_1$ and $c_2$ are determined by the mean color of $I \circ f_\mu |_\D$ and $I \circ f_\mu |_{\D^c}$ respectively; (f) is the segmentation result, the green line is the boundary of target domain $f_\mu(\D)$.}
    \label{fig: hbs seg}
\end{figure}

\subsection{HBS as shape prior}
The HBS is a signature of 2D simply connected shapes, and it represents the shape features by a complex function $B: \D \rightarrow \C$. More precisely, the HBS $B$ of given shape $\Omega_B \subset \C$ is the Beltrami coefficient of a \textit{HBS mapping} $G$, which is defined as
\begin{definition}\label{def: HBS mapping}
    A quasi-conformal mapping $G: \C \to \C$ is called HBS mapping if (1) its Beltrami coefficient $\mu_{G|_\D}$ is just the HBS of shape $\Omega_B = G(\D)$; (2) $G|_\D= \Phi \circ H$ where $\Phi: \D \to \Omega_B$ is conformal, $H: \D \to \D$ is harmonic; (3) $G|_{\D^c}$ is conformal.
\end{definition}

Indeed, the HBS provides a versatile and user-friendly shape representation. It eliminates the need for preprocessing tasks such as resizing and aligning, as well as the imposition of constraints on additional properties like convexity and curvature for input shapes.

The HBS $B$ is invariant under translation, rotation and scaling, which means it can be treated as a high-level shape prior with invariant geometric information. As is well-known, such transformations are widespread in image processing, and we have good reasons to believe that the HBS can effectively guide image segmentation.

In addition to those, HBS also possesses good stability. When two HBS are similar, their corresponding shapes only have minor differences, which is ensured by \cref{thm: stability of HBS}. With this property, we can effortlessly extract common features as prior knowledge from the contours of many objects belonging to the same category by HBS. Here, we adopt the most straightforward method of computing the average HBS. Suppose we have $N$ shapes $\Omega_1, \Omega_2, \cdots, \Omega_N$ then their corresponding HBS can be computed as $B_1, B_2, \cdots, B_N$. We take the average HBS $\bar{B} = \frac{1}{N} \sum^N_{i=1} B_i$ as the shape prior of this category. When only one shape $\Omega_1$ exists, we employ its HBS $B_1$ directly.

\subsection{HBS segmentation model}
The primary objective of the segmentation model \cref{eq: bc seg model} is to locate a suitable Beltrami coefficient $\mu$, then the corresponding quasi-conformal function $f_\mu$ and the target region $\Omega = f_\mu(R)$ can be determined. 
While the HBS $B$ of the given shape $\Omega_B$ is also a Beltrami coefficient of a HBS mapping $G$ such that $\Omega_B = G(\D)$. 
It is clear that the triplets $(B, G, \Omega_B)$ of the HBS and $(\mu, f_\mu, \Omega)$ of model \cref{eq: bc seg model} exhibit incredibly high correlation. 
Inspired by their intrinsic connection, we integrate the HBS into the quasi-conformal topology preserving segmentation model as shape prior and propose a novel HBS segmentation model. Leveraging the excellent attributes of HBS, we can obtain an improved segmentation result $\Omega$ that closely resembles the provided prior shape(s) $\Omega_B$ even when dealing with low-quality images.

In order to better integrate them, we have also made the following modifications:
\begin{enumerate}
    \item \textbf{Extend HBS:} $B$ is defined on $\D$ while $\mu$ is defined on $D$. To solve the difference in the domain of definition, we require the image domain $D \supset \D$ with $0$ as the center of $D$ and then extend $B$ to $D$ by $0$, that is
          \begin{equation}\label{eq: mu_B}
              \mu_B = \begin{cases}
                  B, & z \in \D,             \\
                  0, & z \in D \setminus \D.
              \end{cases}
          \end{equation}
    \item \textbf{Fix the template:} The shape can be represented as $\Omega_B = G(\D)$ in the HBS while the segmentation result is $\Omega = f_\mu(R)$ in model \cref{eq: bc seg model}. Therefore, we need to set the object region $R$ to be $\D$, and the template become
          \begin{equation}
              J_{c_1,c_2} = \begin{cases}
                  c_1, & z \in \D,             \\
                  c_2, & z \in D \setminus \D.
              \end{cases}
          \end{equation}

    \item \textbf{Add constraints:} With the above modifications, $\mu$, and $\mu_B$ are unified in form, which means they are both Beltrami coefficients of some quasi-conformal mappings on $D$. A basic assumption in shape prior segmentation is that the ideal segmentation result $\Omega^*$ should be similar to the prior shape $\Omega_B$. Moreover, \cref{thm: stability of HBS} provides a powerful tool to measure the distance between shapes by their HBS. However, this theorem is not suitable for $\norm{\mu - \mu_B}^2_2$, since $f_\mu$ only needs to be quasi-conformal in the original model \cref{eq: bc seg model} instead of a HBS mapping. Unfortunately, the high complexity of the HBS algorithm makes it exceedingly difficult to entirely confine $f_\mu$ within the space formed by HBS mappings. Consequently, we have chosen to add a crucial constraint term into energy functional
    \begin{equation}\label{eq: harmonic constrain of fmu}
        \int_D \abs{\Delta \text{imag} ( \ln \mu)}^2,
    \end{equation}
    where $\text{imag(.)}$ is the imaginary part of a complex number and $\Delta$ is the Laplace operator. This term comes from the Fotiadis et al. \cite{fotiadis2022beltrami} theorem:
    \begin{theorem}\label{thm: fotiadis}
        If the diffeomorphism $u: M \to N$ satisfies the Beltrami equation
        $$
        \dfrac{u_{\bar{z}}}{u_z} = e^{-2 \omega(z, \bar{z}) + i \phi(z, \bar{z})},
        $$
        where $\phi_{z \bar{z}} = 0$, then $N$ can be equipped with a conformal metric such that $u$ is a harmonic map and the curvature of $N$ is given by
        $$
        \tilde{K}_N = -\dfrac{2 \omega_{z \bar{z}}}{\sinh 2 \omega} e^{\psi},
        $$
        where $\psi$ is the conjugate harmonic function to $\phi$.
    \end{theorem}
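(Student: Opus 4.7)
The plan is to directly construct a conformal metric on $N$ that makes $u$ harmonic, and then read off its Gaussian curvature. Writing the target metric in isothermal coordinates as $\rho^{2}(w,\bar w)|dw|^{2}$ with $\rho$ unknown, the harmonic map equation for $u$ becomes
\begin{equation*}
u_{z\bar z} + (\log \rho^{2})_{w}(u)\, u_{z}\, u_{\bar z} = 0.
\end{equation*}
Substituting the Beltrami equation $u_{\bar z}=\mu u_{z}$ together with the identity $u_{z\bar z} = \partial_{z}(\mu u_{z}) = \mu_{z}u_{z} + \mu u_{zz}$, and dividing through by $\mu u_{z}$, reduces this to
\begin{equation*}
(\log \mu)_{z} + \frac{u_{zz}}{u_{z}} + (\log \rho^{2})_{w}(u)\, u_{z} = 0,
\end{equation*}
which is a first-order equation for the unknown conformal factor expressed through its pullback along $u$.

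The key analytic step is to differentiate with respect to $\bar z$ and extract the integrability condition for $\log \rho^{2}$ to be real-valued. Using $\log \mu = -2\omega + i\phi$, the mixed derivative $(\log\mu)_{z\bar z} = -2\omega_{z\bar z} + i\phi_{z\bar z}$ appears, and for a real conformal factor to exist the purely imaginary contribution must vanish; this is precisely the hypothesis $\phi_{z\bar z}=0$. Under this condition $\phi$ admits a local harmonic conjugate $\psi$ with $\phi+i\psi$ holomorphic, and the factor $e^{\psi}$ emerges naturally upon exponentiation. Integration then yields an explicit conformal factor whose pullback along $u$ takes the form $\rho^{2}\circ u = e^{\psi}/\sinh(2\omega)$, up to a holomorphic factor that can be absorbed by an isothermal change of coordinates on $N$.

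With $\rho^{2}$ identified, the Gaussian curvature follows from the standard formula $\tilde K_{N} = -4\rho^{-2}\partial_{w}\partial_{\bar w}\log\rho$. Translating the $w$-derivatives into $z$-derivatives via the chain rule (using the Beltrami equation to eliminate $u_{\bar z}$), and substituting the expression for $\rho^{2}$, produces after algebraic simplification the claimed identity
\begin{equation*}
\tilde K_{N} = -\frac{2\omega_{z\bar z}}{\sinh 2\omega}\,e^{\psi}.
\end{equation*}

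The main obstacle is the integrability argument: verifying that $\phi_{z\bar z}=0$ is exactly what permits a real-valued conformal factor on $N$, and cleanly isolating the $e^{\psi}$ factor that emerges from the harmonic conjugate. Once that reality condition is pinned down, the remaining steps are routine but delicate chain-rule bookkeeping of pullbacks through the diffeomorphism $u$ and its inverse.
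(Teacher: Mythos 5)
First, a point of reference: the paper does not prove this statement at all --- it is quoted verbatim from Fotiadis et al.\ \cite{fotiadis2022beltrami} as an external result used to justify the penalty term \cref{eq: harmonic constrain of fmu}, so there is no in-paper proof to compare your argument against. Judged on its own merits, your sketch does identify the genuine heart of the theorem: the hypothesis $\phi_{z\bar z}=0$ is exactly the reality/integrability condition that permits a real conformal factor on $N$, and the harmonic conjugate $\psi$ of $\phi$ is what produces the $e^{\psi}$ factor. That part is right.

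The gap is in the middle of your argument, where you ``integrate'' the first-order equation. The term $(\log\rho^2)_w(u)\,u_z$ is not the total $z$-derivative of $(\log\rho^2)\circ u$ --- the chain rule also produces a $(\log\rho^2)_{\bar w}(u)\,\overline{u_{\bar z}}$ contribution --- so the equation you write is not directly integrable as stated, and the resulting conformal factor is not $\rho^2\circ u = e^{\psi}/\sinh(2\omega)$. A consistency check via the Hopf differential $\Phi = \rho^2(u)\,u_z\,\overline{u_{\bar z}} = \rho^2(u)|u_z|^2\,\overline{\mu}$ shows that holomorphy of $\Phi$ (equivalent to harmonicity of $u$) forces $\rho^2(u)|u_z|^2 e^{-2\omega} = |\Phi| = e^{\pm\psi}$, i.e.\ $\rho^2\circ u = e^{\pm\psi+2\omega}/|u_z|^2$; the factor $|u_z|^2$ is \emph{not} the modulus-squared of a holomorphic function of $w$ (since a harmonic $u$ has non-holomorphic $u_z$), so it cannot be ``absorbed by an isothermal change of coordinates'' as you claim, and your subsequent curvature computation would start from the wrong conformal factor. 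The clean route is to work with $\Phi$ directly: writing $H=\rho^2(u)|u_z|^2$, holomorphy of $\Phi = He^{-2\omega - i\phi}$ gives $(\log H)_{z\bar z} = 2\omega_{z\bar z} + i\phi_{z\bar z}$, and the Bochner identity $(\log H)_{z\bar z} = -\tilde K_N(H-L) = -\tilde K_N H(1-e^{-4\omega})$ then forces $\phi_{z\bar z}=0$ and yields $\tilde K_N = -\omega_{z\bar z}\,/\,(|\Phi|\sinh 2\omega)$ with $\log|\Phi|$ the harmonic conjugate of $\mp\phi$, which is the stated formula up to the sign and normalization conventions for $\psi$ and the Laplacian. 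I recommend recasting your argument in that form.
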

    If $f_\mu$ is a HBS mapping, it can be decomposed as $f_\mu = \Phi \circ H$ where $\Phi$ is conformal and $H$ is harmonic. Then \cref{thm: fotiadis} shows that the imaginary part of $\ln \mu$ is harmonic. Hence, we minimize term \cref{eq: harmonic constrain of fmu} to make $f_\mu$ as close to an HBS mapping as possible.
\end{enumerate}


Therefore, the HBS segmentation model is formulated as follows:
\begin{equation}\label{eq: hbs seg model}
    \begin{split}
        \min_{\mu} E_{\text{HBS}}(\mu, B, c_1, c_2) =
        \min_{\mu} \int_D
        & (I \circ f_\mu - J_{c_1,c_2})^2
        + \alpha \abs{\mu}^2 + \beta \abs{\nabla \mu}^2
        + \gamma \abs{u}^2 \\
        & + \delta \abs{\nabla u}^2
        + \lambda \abs{\mu - \mu_B}^2 + \eta \abs{\Delta \text{imag} ( \ln \mu)}^2.
    \end{split}
\end{equation}
The first five terms directly come from model \cref{eq: bc seg model} and play the same roles as before. The sixth term measures the similarity of the segmentation result and the given shape prior HBS. Furthermore, the last term with a big penalty parameter $\eta$ is a soft constraint that $f_\mu$ is almost an HBS mapping.

The term $\int_D \abs{\mu - \mu_B}^2$ is crucial for proposed model \cref{eq: hbs seg model}. For a given image $I$, suppose that there is an ideal segmentation $\Omega^*$, whose HBS is $\mu^*$, then we have
\begin{equation}
    \norm{\mu - \mu^*}_2 \le \norm{\mu - \mu_B}_2 + \norm{\mu_B - \mu^*}_2.
\end{equation}
According to the basic assumption of shape prior segmentation, the given HBS $\mu_B$ is analogous to $\mu^*$, which means $\norm{\mu_B - \mu^*}_2^2$ is some small constant. Therefore, we can minimize the term $\norm{\mu - \mu_B}_2^2$ to make segmentation result $\Omega$ a good approximation of $\Omega^*$, which provides a simple yet effective method to utilize prior information for guiding the segmentation process. 

\begin{figure}
    \begin{center}
        \includegraphics[width=\textwidth]{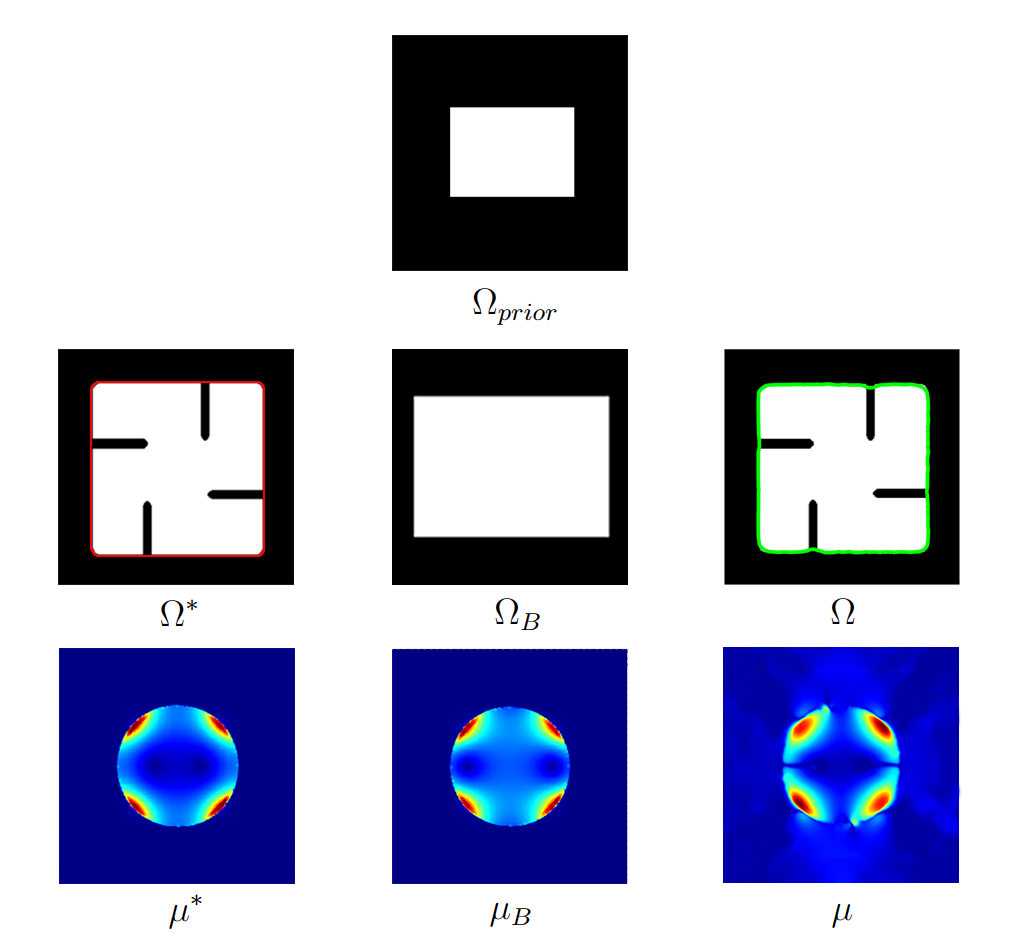}
    \end{center}
    \caption{The illustration of why the HBS can guide segmentation. \textbf{Left:} $\Omega^*$ is the ideal segmentation result, and $\mu^*$ is the HBS of $\Omega^*$. 
    \textbf{Middle:} $\Omega_{prior}$ is the reference shape, $\Omega_B$ is the shape obtained by translating, rotating and scaling $\Omega_{prior}$ to maximum the similarity to $\Omega^*$, and $\mu_B$ is the HBS of $\Omega_{prior}$ and $\Omega_B$. 
    \textbf{Right:} $\Omega$ is the segmentation result, and $\mu$ is the Beltrami coefficient of the deformation map corresponding to $\Omega$.}
    \label{fig: ill_mu}
\end{figure}

\cref{fig: ill_mu} provides a visible example demonstrating how minimizing $\norm{\mu - \mu_B}_2^2$ can induce a segmentation result $\Omega$ similar to the unknown ideal segmentation $\Omega^*$.
Several key points merit attention in this example:
\begin{enumerate}
    \item $\Omega^*$ and $\mu^*$ represent the unknown ideal segmentation;
    \item $\Omega_B$ is the shape obtained by translating, rotating and scaling $\Omega_{prior}$ to maximum the similarity to $\Omega^*$;
    \item In practice, $\Omega_B$ would never be computed but $\mu_B$ can be computed from $\Omega_{prior}$;
    \item Directly comparing $\Omega$ with $\Omega_{prior}$ or even $\Omega_B$ to achieve better results is not feasible since they visually differ from the ideal segmentation $\Omega^*$.
\end{enumerate}

The following theorem guarantees the existence of the minimizer of model \cref{eq: hbs seg model} over $\mu$.
\begin{theorem}\label{thm: existence}
    The energy functional $E_{\text{HBS}}$ in model \cref{eq: hbs seg model} has a minimizer in $\mathcal{A}^M_\epsilon \subset C^1(D)$, where
    \begin{equation*}
        \mathcal{A}^M_\epsilon = \{ \mu \in C^1(D) : \norm{D \mu}_\infty \le M, \norm{\mu}_\infty \le 1 - \epsilon \},
    \end{equation*}
    for some $M > 0$ and $\epsilon \in (0, 1)$.
\end{theorem}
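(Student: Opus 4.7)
The plan is to apply the direct method of the calculus of variations. Since every summand in $E_{\text{HBS}}$ is non-negative, the infimum $m := \inf_{\mu \in \mathcal{A}^M_\epsilon} E_{\text{HBS}}(\mu, B, c_1, c_2)$ is finite (it is bounded above by the value of $E_{\text{HBS}}$ at any smooth, small, strictly non-vanishing test Beltrami coefficient). I pick a minimizing sequence $\{\mu_n\} \subset \mathcal{A}^M_\epsilon$ with $E_{\text{HBS}}(\mu_n, B, c_1, c_2) \to m$, extract a subsequence converging to some candidate $\mu^* \in \mathcal{A}^M_\epsilon$, and verify lower semicontinuity of every term along this convergence.

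Compactness is built into the definition of $\mathcal{A}^M_\epsilon$. The uniform bound $\norm{\mu_n}_\infty \le 1-\epsilon$ and the Lipschitz bound $\norm{D\mu_n}_\infty \le M$ give equicontinuity, so by Arzelà–Ascoli a subsequence (still denoted $\mu_n$) converges uniformly on $\overline{D}$ to a limit $\mu^*$ with $\norm{\mu^*}_\infty \le 1-\epsilon$. The gradients $D\mu_n$ are uniformly bounded in $L^\infty(D)$; extracting a further weak-$*$ subsequence and using weak-$*$ lower semicontinuity of the $L^\infty$ norm yields $\norm{D\mu^*}_\infty \le M$. A mild regularization (or a slight strengthening to a uniform $C^{1,\alpha}$ bound) then ensures $\mu^* \in C^1(D)$ rather than merely Lipschitz, placing $\mu^* \in \mathcal{A}^M_\epsilon$.

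Next I check lower semicontinuity of each term. The quadratic potentials $\int \abs{\mu}^2$ and $\int \abs{\mu - \mu_B}^2$ converge by uniform convergence and dominated convergence. The Dirichlet-type terms $\int \abs{\nabla \mu}^2$, $\int \abs{u}^2$, $\int \abs{\nabla u}^2$ are lower semicontinuous under the weak $L^2$ convergence implied by the uniform $L^\infty$ bound on derivatives. For the fidelity term $\int (I \circ f_\mu - J_{c_1,c_2})^2$, I invoke the continuous dependence of the solution of the Beltrami equation on its coefficient (a standard strengthening of \cref{thm: Measurable Riemannian Mapping Theorem}): since $\mu_n \to \mu^*$ uniformly while staying in the compact subset $\{\abs{\mu} \le 1-\epsilon\}$ of $\D$, the associated homeomorphisms satisfy $f_{\mu_n} \to f_{\mu^*}$ uniformly on compact subsets of $D$, which together with boundedness of $I$ delivers convergence of the fidelity term.

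The main obstacle is the last penalty $\int \abs{\Delta\,\textrm{imag}(\ln \mu)}^2$, which involves second derivatives not directly controlled by the $C^1$ bound defining $\mathcal{A}^M_\epsilon$, and which is singular where $\mu$ vanishes. The crucial observation is that the energy bound $E_{\text{HBS}}(\mu_n) \le m+1$ already forces $\Delta\,\textrm{imag}(\ln \mu_n)$ to be uniformly bounded in $L^2(D)$; combining this with the uniform convergence of $\textrm{imag}(\ln \mu_n) \to \textrm{imag}(\ln \mu^*)$ on compact subsets of $\{\mu^* \ne 0\}$ lets me extract a further subsequence along which $\Delta\,\textrm{imag}(\ln \mu_n)$ converges weakly in $L^2$ to the distributional Laplacian of $\textrm{imag}(\ln \mu^*)$, and weak lower semicontinuity of the $L^2$ norm closes the estimate. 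The delicate technical point is the handling of the possible zero set of $\mu^*$, which one addresses either by restricting attention to $\mu$ bounded away from $0$ (justified by the fact that every $\mu_n$ with finite energy must already be non-vanishing almost everywhere) or by interpreting the logarithm through a limit of mollifications. Once every term is shown to be lower semicontinuous, the chain $E_{\text{HBS}}(\mu^*) \le \liminf_n E_{\text{HBS}}(\mu_n) = m$ identifies $\mu^*$ as the desired minimizer.
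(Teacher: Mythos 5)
Your argument is essentially the paper's own: the paper likewise establishes that $\mathcal{A}^M_\epsilon$ is compact (citing \cite{lam2014landmark} for completeness and boundedness, where you invoke Arzel\`a--Ascoli directly) and then concludes by continuity of $E_{\text{HBS}}$ on a compact set, justifying the continuous dependence of $f_\mu$ on $\mu$ via the Beltrami holomorphic flow and Bojarski's theorem exactly as you do for the fidelity term. If anything you are more careful than the published proof, which simply declares the four terms depending directly on $\mu$ to be ``obviously continuous'': the two difficulties you isolate --- that a uniform limit of $C^1$ functions with $\norm{D\mu_n}_\infty \le M$ need only be Lipschitz, and that $\abs{\Delta\,\text{imag}(\ln\mu)}^2$ involves second derivatives not controlled by the $C^1$ bound and is singular on the zero set of $\mu$ --- are genuine, and neither is addressed in the paper's proof either.
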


\begin{proof}
    Following a similar argument in \cite{lam2014landmark}, we can show that $\mathcal{A}^M_\epsilon$ is Cauchy complete and bounded, and hence compact. The four terms containing $\mu$ are obviously continuous over $\mu$. Also, according to the Beltrami holomorphic flow and Bojarski theorem, the associated quasi-conformal map $f_\mu$ varies continuously (smoothly) under a continuous (smooth) variation of $\mu$. Hence, the other terms are continuous over $\mu$ as well. Since $E_\text{HBS}$ is continuous on the compact set $\mathcal{A}^M_\epsilon$, $E_\text{HBS}$ has a minimizer in it.
\end{proof}

Similar to that mentioned in \cite{siu2020image}, the smaller $\epsilon$ is, the more geometric distortion is allowed, and the deformed contour gets closer to the actual object boundary. However, small $\epsilon$ increases the difficulty of finding the minimizer, which is a trade-off between the accuracy and the efficiency. Meanwhile, the bigger $M$ is, the smoother the deformed contour is, but we get less accurate segmentation results. We can choose $\epsilon$ and $M$ according to the prior information of the given shape.

\section{Numerical algorithm}\label{section: algorithm}
In practice, a digital image consists of many pixels, which can be discrete using a triangular mesh $(V, E, F)$, where $V \subset D$ is the set of vertices, $E$ is the edges between vertices, and $F$ is the set of triangle faces formed by edges. 
The deformation function $f_{\mu_V}: V \rightarrow D$ is piecewise linear on each face and $f_{\mu_V}|_{\partial D} = Id$. The first derivative of $f_{\mu_V}$ is a piecewise constant, so the Beltrami coefficient $\mu_F: F \rightarrow \C$ is a constant on each face. Subsequently, $\mu_V: V \rightarrow \C$ is computed by averaging $\mu_F$ over the faces sharing the same vertex. A similar procedure is followed to compute $\mu_{B,V}$. Although the input images are only defined on $V$, we treat it as a continuous function $I, J: D \rightarrow \R$ through interpolation for the sake of convenience in the discussion. Hence, the deformed image $I \circ f_{\mu_V}: V \rightarrow \R$ can be achieved. As a result, the discrete HBS segmentation model is as follows:
\begin{equation}\label{eq: discrete hbs seg model}
    \begin{split}
        \min_{\mu_V} E_{\text{DHBS}}(\mu_V, B, c_1, c_2) =
        \min_{\mu_V} \sum_{v \in V}
        & (I \circ f_{\mu_V} - J_{c_1,c_2})^2
        + \alpha \abs{\mu_V}^2 + \beta \abs{\nabla \mu_V}^2
        + \gamma \abs{u_V}^2 \\
        & + \delta \abs{\nabla u_V}^2
        + \lambda \abs{\mu_V - \mu_{B,V}}^2 + \eta \abs{\Delta \text{imag}(\ln \mu_V)}^2.
    \end{split}
\end{equation}

However, this model implies a constraint $\mu_V = \frac{\partial f_{\mu_V}}{\partial \bar{z}} / \frac{\partial f_{\mu_V}}{\partial z}$, which presents a significant challenge during searching the minimizer. To address this, we disconnect the strong coupling between $\mu_V$ and $f_{\mu_V}$ and set another Beltrami coefficient $\nu_{V}$ without direct relation with $f_{\mu_V}$ as an alternative, which induces a weak HBS segmentation model:
\begin{equation}\label{eq: weak discrete hbs seg model}
    \begin{split}
         &\min_{\mu_V, \nu_V} E_{\text{WHBS}}(\mu_V, \nu_V, B, c_1, c_2) \\
        =& \min_{\mu_V, \nu_V} \sum_{v \in V}
        (I \circ f_{\mu_V} - J_{c_1,c_2})^2
        + \alpha \abs{\nu_V}^2 + \beta \abs{\nabla \nu_V}^2
        + \gamma \abs{u_V}^2 \\
        & + \delta \abs{\nabla u_V}^2
        + \lambda \abs{\nu_V - \mu_{B,V}}^2 + \eta \abs{\Delta \text{imag}(\ln \nu_V)}^2 + \tau \abs{\nu_V - \mu_V}^2.
    \end{split}
\end{equation}
Here we replace $\mu_V$ by $\nu_V$ in 2nd, 3th, 6th and 7th terms of model \cref{eq: discrete hbs seg model} and add a new term $\tau \abs{\nu_V - \mu_V}^2$ to handle the difference between $\mu_V$ and $\nu_V$, where $\tau > 0$ is a big penalty parameter. If $\nu_V = \mu_V$, $E_{\text{WHBS}}$ will degenerate to $E_{\text{DHBS}}$.

This substitution allows us to split the original problem into two parts, and we call them \textbf{$\mu$ subproblem}
\begin{equation}\label{eq: mu subproblem}
    \min_{\mu_V} E_{\mu}(\mu_V, c_1, c_2) =
    \min_{\mu_V} \sum_{v \in V}
    (I \circ f_{\mu_V} - J_{c_1,c_2})^2
    + \gamma \abs{u_V}^2
    + \delta \abs{\nabla u_V}^2
    ,
\end{equation}
and \textbf{$\nu$ subproblem}
\begin{equation}\label{eq: nu subproblem}
    \begin{split}
    \min_{\nu_V} E_{\nu}(\mu_V, \nu_V, B) =
    &\min_{\nu_V} \sum_{v \in V}
    \alpha \abs{\nu_V}^2 + \beta \abs{\nabla \nu_V}^2
    + \lambda \abs{\nu_V - \mu_{B,V}}^2 \\
    &+ \eta \abs{\Delta \text{imag}(\ln \nu_V)}^2
    + \tau \abs{\nu_V - \mu_V}^2
    .
    \end{split}
\end{equation}

After such separation, these two subproblems have gained clearer and more practical meanings. The essence of the $\mu$ subproblem lies in determining the deformation mapping $f_{\mu_V}$ according to the pixel level difference and additional regularizations, then $\mu_V$ is achieved. Conversely, the target of the $\nu$ subproblem is to normalize the Beltrami coefficient $\mu_V$ obtained from the previous step under shape prior information for further segmentation. The soft constraint term $\abs{\nu_V - \mu_V}$ bridges these two processes. Therefore, we can solve these two subproblems alternatively until they converge, which then gives the solution of the original optimization problem \cref{eq: discrete hbs seg model}. 

More specifically, suppose $\mu_{V,n}$ and $\nu_{V,n}$ is obtained in $n$-th iteration, then $\mu_{V,n+1}$ and $\nu_{V,n+1}$ can be computed by
\begin{equation}
    \mu_{V,n+1} = \argmin_{\mu_V} E_\mu(\mu_V, c_1, c_2),
\end{equation}
\begin{equation}
    \nu_{V,n+1} = \argmin_{\nu_V} E_\nu(\mu_{V,n+1},\nu_V, B).
\end{equation}
The total algorithm is summarized in \cref{alg: main}.

\subsection{Solving $\mu$ subproblem}\label{section: mu subproblem}
The target of $\mu$ subproblem is to find an optimal $\mu_V$,
but there are two additional input variables $c_1$ and $c_2$ in $E_\mu$. Compared to solving for multiple variables simultaneously, it is simpler and more common to determine $c_1$ and $c_2$ in advance. Given $f_{\mu_{V,n}}$, then $\mu_{V,n}$ is fixed and we can compute the minimizers explicitly by
\begin{equation}\label{eq: mu c1 c2}
    c_1 = \frac{\sum_{v \in V \cap \D} I \circ f_{\mu_{V,n}}(v)}{\sharp (V \cap \D)}, \quad
    c_2 = \frac{\sum_{v \in V \setminus \D} I \circ f_{\mu_{V,n}}(v)}{\sharp (V \setminus \D)},
\end{equation}
where $\sharp(.)$ means the number of vertices inside the set. Except in necessary cases, we will omit all occurrences of $c_1$ and $c_2$ in the following discussion.

Even \cref{thm: existence} ensure the existence of minimizer, excessively large deformations can still burden the algorithm, lead to fluctuating segmentation results, and take long computation time. In $n$-th iteration, instead of directly calculating $f_{\mu_{V,n+1}}$, we attempt to compute a smaller deformation $g_{n+1}$ based on the previous segmentation result $f_{\mu_{V,n}}$ 
\begin{equation}
    f_{\mu_{V,n+1}} = g_{n+1} \circ f_{\mu_{V,n}} = g_{n+1} \circ \cdots \circ g_2 \circ f_{\mu_{V,1}},
\end{equation}
where $g_k: D \rightarrow D$. Since $f_{\mu_{V,n}}$ is quasi-conformal then bijective, $f_{\mu_{V,n}}^{-1}$ exists, we  have
\begin{equation}\label{eq: mu 1st term approx v}
    \sum_{v \in V} (I \circ g_{n+1} \circ f_{\mu_{V,n}} - J)^2
    = \sum_{v \in f_{\mu_{V,n}}(V)} (I \circ g_{n+1} - J \circ f_{\mu_{V,n}}^{-1})^2
    \approx \sum_{v \in V} (I \circ g_{n+1} - J_n)^2,
\end{equation}
where $J_n = J \circ f_{\mu_{V,n}}^{-1}$ is the deformed template image by $f_{\mu_{V,n}}$ and is the temporary segmentation result of after $n$ iterations. 
The approximate equality is due to the acceptable interpolation error, which decreases as the image resolution increases. Furthermore, this term can be expanded by a first-order Taylor series
\begin{equation}\label{eq: mu 1st term taylor}
    \sum_{v \in V} (I \circ g_{n+1} - J_n)^2
    \approx \sum_{v \in V} \abs{I - J_n + \nabla I \cdot h_{n+1}}^2,
\end{equation}
where $h_{n+1} = g_{n+1} - Id: D \to \C$, $\nabla I: D \to \C$ is the discrete gradient of $I$ and $\cdot$ is an operator for two complex number such that $(a+bi) \cdot (c+di) = ac+bd$.

Since $u_{V,n+1} = h_{n+1} \circ f_{\mu_{V,n}} + f_{\mu_{V,n}} - Id =  h_{n+1} \circ f_{\mu_{V,n}} + u_{V,n}$, the same interpolation approximation method in \cref{eq: mu 1st term approx v} can be also applied to the other two regular terms of $E_\mu$ in \cref{eq: mu subproblem}. Finally, $\mu$ subproblem can be rewritten in form about $h$:
\begin{equation}
    \min_h E_\mu (h)
    \approx \min_h \sum_{v \in V} \abs{I - J_n + \nabla I \cdot h}^2
    + \gamma \abs{h}^2
    + \delta \abs{\nabla h}^2.
\end{equation}
This is a quadratic optimization problem and can be solved according to Euler-Lagrange equation
\begin{equation}\label{eq: mu problem pde}
    (I-J_n+ \nabla I \cdot h) \nabla I + \gamma h - \delta \Delta h = 0,
\end{equation}
subject to $h = 0$ on $\partial D$. We mark the solution of \cref{eq: mu problem pde} in $n$-th iteration as $h_{n+1}$ and the deform function is
\begin{equation}\label{eq: mu deform function f}
    f^*_{\mu_{V,n+1}} = (h_{n+1} + Id) \circ f_{\mu_{V,n}}.
\end{equation}
Then the Beltrami coefficient $\mu_{V,n+1}$ can be computed by
\begin{equation}\label{eq: mu mu}
    \mu_{V,n+1}(v) = \frac{\partial f^*_{\mu_{V,n+1}}}{\partial \bar{z}}(v) \bigg / \frac{\partial f^*_{\mu_{V,n+1}}}{\partial z}(v),
\end{equation}
To avoid some numerical errors, we designate $\mu_{V,n}(v) = 0$ if $\frac{\partial f^*_{\mu_{V,n}}}{\partial z}(v) = 0$ for some $v \in V$.

Besides, we also need $\abs{\mu_{V,n+1}(v)} < 1$ for all $v \in V$ to guarantee the deform function is quasi-conformal, so an additional truncation $T$ is applied to $\mu_{V,n+1}$:
\begin{equation}\label{eq: mu truncate}
    T(\mu)(v) = \begin{cases}
        \mu(v),                                  
        & \abs{\mu(v)} < 1 - \epsilon,   \\
        \dfrac{1 - \epsilon}{\abs{\mu(v)}}\mu(v), 
        & \abs{\mu(v)} \ge 1 - \epsilon,
    \end{cases}
\end{equation}
where $\epsilon$ is a small positive number. So far, we have addressed the $\mu$ subproblem and obtained the desired $\mu_{V,n+1}$.

\subsection{Solving $\nu$ subproblem}
As mentioned before, $\nu$ subproblem \cref{eq: nu subproblem} essentially involves refining the solution $\mu_{V,n+1}$ obtained from the previous step. 
For any complex function $p: D \rightarrow \C$ and $t \in \R$, we consider
\begin{equation}\label{eq: derivative of E_nu}
    \begin{split}
        &\frac{d}{dt}\Big|_{t=0} E_{\nu}(\mu_V, \nu_V + tp , B)  \\
    =& \sum_{v \in V}\frac{d}{dt}\Big|_{t=0} 
    \Big(
        \alpha \abs{\nu_V + tp}^2 
        + \beta \abs{\nabla (\nu_V+tp)}^2
        + \lambda \abs{\nu_V - \mu_{B,V} + tp}^2\\
    &\quad + \tau \abs{\nu_V - \mu_V + tp}^2
    + \eta \left(\Delta \frac{1}{2i}
    \left(\ln (\nu_V + tp) - \overline{\ln (\nu_V + tp)}\right)\right)^2
     \Big) \\
    =& \sum_{v \in V} 2((\alpha+\lambda+\tau) \nu_V - \lambda\mu_{B,V} - \tau \mu_V) \cdot p + 2 \beta \nabla \nu_V \cdot \nabla p \\
    &\quad - \frac{\eta}{4} \frac{d}{dt}\Big|_{t=0} \left(\Delta \ln (v+tp) - \Delta \overline{\ln(v+tp)}\right)^2 \\
    =& \sum_{v \in V} 2\left((\alpha+\lambda+\tau) \nu_V - \lambda\mu_{B,V} - \tau \mu_V\right) \cdot p 
    + 2 \beta \nabla \nu_V \cdot \nabla p \\
    &\quad - \frac{\eta}{2} \left(\Delta \ln \nu_V - \Delta \overline{\ln \nu_V}\right)
    \left(\Delta \frac{p}{\nu_V} - \Delta \frac{\overline{p}}{\overline{\nu_V}}\right).
    \end{split}
\end{equation}

Let $\phi = \text{imag}(\ln \nu_V)$, then $\phi: V \to \R$ is the argument of $\nu_V$ and $\ln \nu_V - \overline{\ln \nu_V} = 2i \phi$. 
Let $p = p_x + i p_y$ and $\nu_V = \nu_x + i \nu_y$, we have
\begin{equation}
    \frac{p}{\nu_V} - \frac{\overline{p}}{\overline{\nu_V}} 
    = \frac{p \overline{\nu_V} - \overline{p} \nu_V}{\abs{\nu_V}^2} 
    = 2i\frac{-p_x \nu_y + p_y \nu_x}{\abs{\nu_V}^2} 
    = 2i \frac{(i\nu_V) \cdot p}{\abs{\nu_V}^2} 
    = 2i \frac{i}{\overline{\nu_V}} \cdot p.
\end{equation}
According to the Green's first identity, we have
\begin{equation*}
    \begin{split}
        &\sum_{v \in V} \left(\Delta \ln \nu_V - \Delta \overline{\ln \nu_V}\right)
        \left(\Delta \frac{p}{\nu_V} - \Delta \frac{\overline{p}}{\overline{\nu_V}}\right)
        = -4 \sum_{v \in V} \Delta \phi \Delta\left(\frac{i}{\overline{\nu_V}} \cdot p\right) \\
        = & -4 \left( \sum_{v \in V} \Delta^2 \phi \left(\frac{i}{\overline{\nu_V}} \cdot p\right) 
        + \sum_{v \in \partial V} \left(\nabla \left(\frac{i}{\overline{\nu_V}} \cdot p\right) \cdot \vec{n}\right) \Delta \phi 
        - \sum_{v \in \partial V} \left(\nabla (\Delta \phi )\cdot \vec{n}\right) \left(\frac{i}{\overline{\nu_V}} \cdot p\right)\right)
    \end{split}
\end{equation*}
where $\vec{n}$ is the normal vector of $\partial V$. Since $\nu_V = 0$ always holds on $\partial V$, we have $p = 0$ and $\phi = 0$ on $\partial V$. Therefore, the last two terms can be moved, and we get
\begin{equation}
    \begin{split}
        \sum_{v \in V} \left(\Delta \ln \nu_V - \Delta \overline{\ln \nu_V}\right)
        \left(\Delta \frac{p}{\nu_V} - \Delta \frac{\overline{p}}{\overline{\nu_V}}\right)
        = -4 \sum_{v \in V}  \frac{ i \Delta^2 \phi}{\overline{\nu_V}} \cdot p.
    \end{split}
\end{equation}
Put it back to \cref{eq: derivative of E_nu}, and we get gradient of $E_\nu$ with respect to $\nu_V$ as
\begin{equation}\label{eq: nu gradient}
        \nabla_{\nu_V} E_{\nu}(\mu_V, \nu_V , B)  
    = 2 \sum_{v \in V} \left((\alpha+\lambda+\tau) \nu_V -\beta \Delta \nu_V - \lambda\mu_{B,V} - \tau \mu_V 
    +\eta \frac{ i \Delta^2 \phi}{\overline{\nu_V}} \right).
\end{equation}

After $\nabla_{\nu_V} E_\nu$ achieved, gradient desent method  is applied to compute $\nu_{V,n+1}$ in $n$-iteration of solving subproblem \cref{eq: nu subproblem}, which is summarized in \cref{alg: nu}.

\begin{algorithm}[H]
    \caption{$\nu$ subproblem algorithm in $n$-th iteration}
    \label{alg: nu}
    \begin{algorithmic}
        \STATE \textbf{Inputs:} triangle mesh $(V, E, F )$ on Rectangle domain $D \subset \C$, $\mu_{V,n+1}$, $\mu_{B,V}$, max iteration times $K$, stop precision $\epsilon$ and other model parameters.
        \STATE \textbf{Initialize:} Let $k=1$ and $\nu_{V,n+1}^1 = \mu_{V,n+1}$.
        \WHILE{$k < K$}
        \STATE Compute the argument $\phi$ of $\nu_{V,n+1}^k$.
        \STATE Compute $\nabla E_{\nu} = \nabla_{\nu_V} E_\nu(\mu_{V,n+1},\nu_{V,n+1}^k, B)$ by \cref{eq: nu gradient}.
        \STATE Let $\nu_{V,n+1}^{k+1} = T(\nu_{V,n+1}^k - t \, \nabla E_{\nu}) $, where truncation function $T$ is defined in \cref{eq: mu truncate}.
        \IF{$\norm{\nu_{V,n+1}^{k+1} - \nu_{V,n+1}^k}_2 < \epsilon$}
        \STATE Break.
        \ENDIF
        \STATE Let $k = k+1$.
        \ENDWHILE
        \RETURN $\nu_{V,n+1} = \nu_{V,n+1}^{k+1}$.
    \end{algorithmic}
\end{algorithm}

By the LBS method, we reconstruct the corresponding refined deform function $f_{\mu_{V,n+1}}$ from $\nu_{V,n+1}$. That is, solve the following PDE:
\begin{equation}
    \begin{split}
         \dfrac{\partial f_{\mu_{V,n+1}}}{\partial \bar{z}} \bigg /  \dfrac{\partial f_{\mu_{V,n+1}}}{\partial z}
         = \nu_{V,n+1} &\text{ on } \quad D, \\
        f_{\mu_{V,n+1}} = Id & \text{ on } \quad \partial D.
    \end{split}
\end{equation}
The solution $f_{\mu_{V,n+1}}$ induces the temporary segmentation result $J_{n+1} = J \circ f_{\mu_{V,n+1}}^{-1}$ and then is used in next iteration. Finally, we get the desired segmentation result $f_{\mu_{V,n+1}}(\D)$ when algorithm converges.

\begin{algorithm}[H]
    \caption{HBS segmentation algorithm}
    \label{alg: main}
    \begin{algorithmic}
        \STATE \textbf{Inputs:} Rectangle domain $D \subset \C$, the image to be segmented $I: D \rightarrow \R$, unit disk template $J: D \rightarrow \R$, HBS $B: \D \rightarrow \C$, max iteration times $N$, stop precision $\epsilon$ and other model parameters.
        \STATE \textbf{Initialize:} Build triangle mesh $(V, E, F)$ on $D$ and compute $\mu_{B,V}$ form $B$.
        \STATE Let $n=1$ and $f_{\mu_{V,1}} = Id$.
        \WHILE{$n < N$}
        \STATE Compute $c_1$ and $c_2$ by \cref{eq: mu c1 c2} and then $J_n = J \circ f_{\mu_{V,n}}$.
        \STATE Solve \cref{eq: mu problem pde} and get $h_{n+1}$.
        \STATE Compute $f^*_{\mu_{V,n+1}}$ by \cref{eq: mu deform function f}.
        \STATE Compute $\mu_{V,n+1}$ by \cref{eq: mu mu}.
        \STATE Solve $\nu_{V,n+1}$ as \cref{alg: nu}.
        \STATE Compute $f_{\mu_{V,n+1}}$ by LBS method.
        \IF{$\norm{f_{\mu_{V,n+1}} - f_{\mu_{V,n}}}_2 < \epsilon$}
        \STATE Break.
        \ENDIF
        \STATE Let $n = n+1$.
        \ENDWHILE
        \RETURN Segmentation result $f_{\mu_{V,n+1}}(\D)$.
    \end{algorithmic}
\end{algorithm}

\section{Experimental result}\label{section: exp}
In this section, we demonstrate the effectiveness of the proposed HBS segmentation model through different experimental results. Our experiments are implemented using MATLAB R2014a running on 4-way Intel(R) Xeon(R) Gold 6230 processors with 80 cores at 2.10GHz base frequency and 1024 GB RAM under Ubuntu 18.04LTS 64-bit operating system.

\subsection{Synthetic images}\label{section: exp: synthetic image}
Deformation of the target object due to various reasons, such as image corruption, object occlusion, etc., is pervasive. In such cases, prior information can greatly assist us in making accurate segmentations. By utilizing HBS as the shape prior, the proposed segmentation model is aware of the approximate shape of the target object beforehand, leading to improved performance.

To illustrate this more intuitively, we have constructed a series of simple synthetic images to validate our model and display the results in \cref{fig: synthetic image}. 
For each row, the 1st column is the original shape, the 2nd column is the template shape, the 3rd column is the HBS corresponding to the template shape, the 4th column is the segmentation result without HBS, and the 5th column is the segmentation result of our proposed model. Before analyzing the results, it is essential to clarify some points, and we will use them in the following experiments unless otherwise specified.
\begin{enumerate}
    \item The template provided in the 2nd column is not $J$ in our model \cref{eq: hbs seg model}, since $J$ has been fixed as the unit disk. Here, the template shape will not be fed into our model and is solely used to compute HBS $\mu_{B,V}$ in the 3rd column. We show them here to help our humans visually understand prior information.
    \item It is the magnitude of $\mu_{B,V}$ shown in the 3rd column to represent HBS.
    \item The green line in the 4th and 5th columns is the segmentation boundary.
    \item The segmentation result without HBS is obtained by setting $\lambda = \eta = 0$ of model \cref{eq: hbs seg model}, which then is equivalent to the quasi-conformal topology preserving segmentation model \cref{eq: bc seg model}. We also use the unit disk as $J$.
\end{enumerate}

Let us look back on \cref{fig: synthetic image}, from which we can preliminarily obtain the following information:
\begin{enumerate}
    \item The HBS played a significant role in the low-quality image segmentation. We made some modifications to basic geometric shapes, resulting in them lacking certain parts, like subfigures (a1), (c1), (d1), and (e1), or having additional parts, like the bottom side and right side of subfigure (b1). We use such modifications as damage and occlusion to represent image degradation. Under the guidance of HBS, our model essentially disregarded the impact of these variations and provided satisfactory segmentation, which almost matched the given template. Meanwhile, the segmentation result without HBS is much more faithful to the actual shape boundaries.
    \item The proposed model can segment both the simply connected and multiply connected images and yield simply connected results. Although the HBS is the signature of simply-connected shapes, our model works well on multi-connected images in (c1) and (d1).
    \item The proposed model does not impose any requirements on the position, size, or orientation of the target object in the image. The HBS is invariant under translation, rotation, and scaling, and our model inherits this feature. With the same HBS, (a1) and (b1) are rectangles of different sizes, and our model can correctly segment them. Similarly, (c1) and (d1) are triangles in different orientations.
    \item The proposed model tolerates reasonable discrepancies between the prior information and the actual image. In experiment (e), the segmentation result is acceptable when the template is a circle, and the image is close to an ellipse.
    \item The proposed model effectively utilizes the prior information provided by the given template rather than simply extracting some basic geometric properties. In experiment (b), our model handles convex and concave parts at the same time.
    \item The final segmentation results can not strictly align with the provided templates but are inevitably influenced by the actual shape of the target region. For example, as shown in experiment (b), the rectangle in (b1) has notches at the top and left while protruding at the bottom and right. Therefore, the green curve representing the segmented boundary in (b5) is inward or outward at corresponding positions. Such situations are also noticeable in all other experiments.
\end{enumerate}

\begin{figure}
    \begin{center}
        \includegraphics[width=15.5cm]{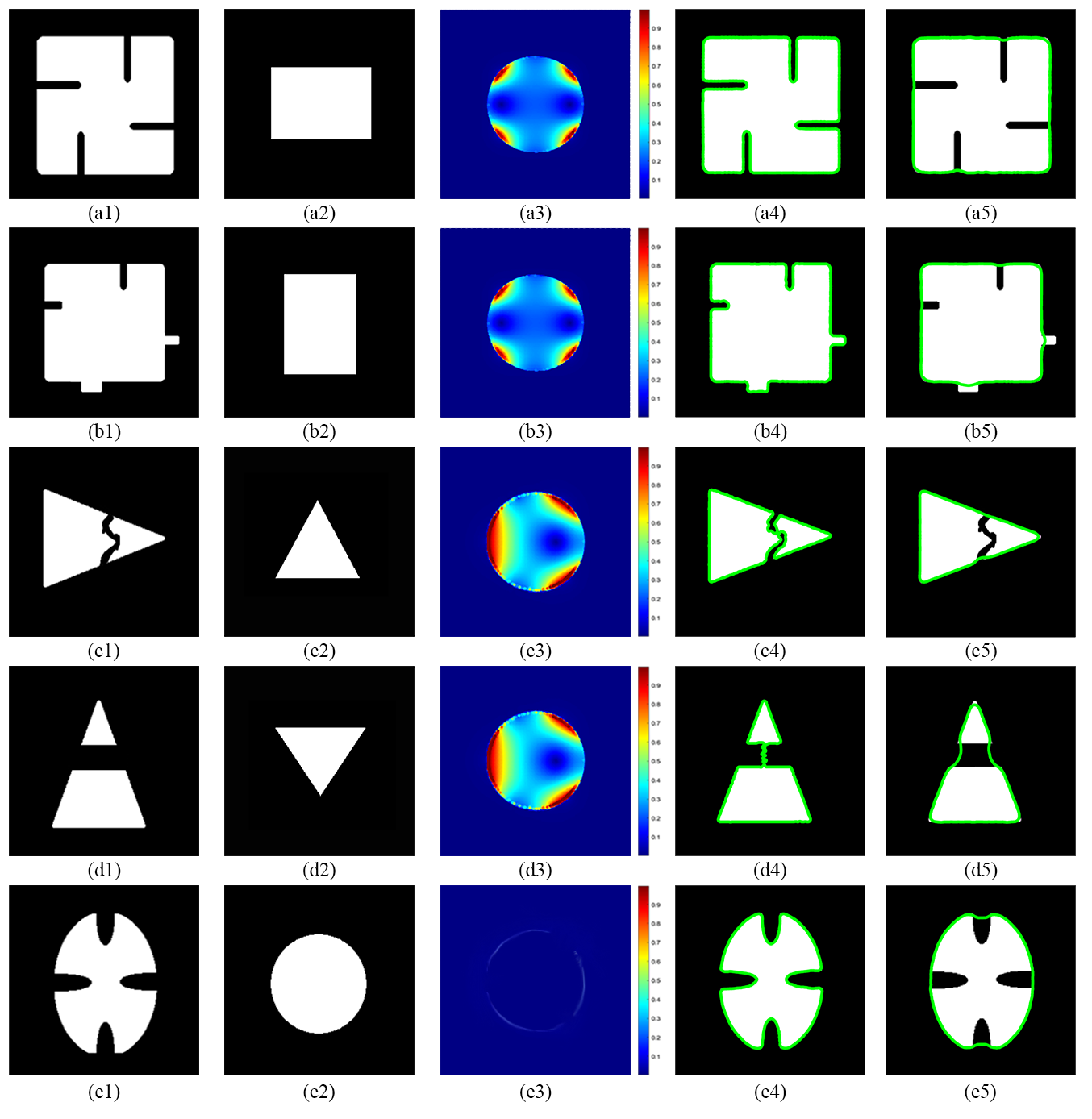}
    \end{center}
    \caption{Segmentation results of binary images. For each row, the sequence from left to right includes the original image, template shape, template's HBS, the segmentation result without HBS, and the segmentation result with HBS.}
    \label{fig: synthetic image}
\end{figure}

We will further observe the impact of different templates on the segmentation results from \cref{fig: seg under diff temps}. The meaning of each column is the same with \cref{fig: synthetic image}. 
The original image is similar to a hexagon, and we used hexagon (a2), round (b2), diamond (c2), and square (d2) as templates to segment it, with the results shown in (a5), (b5), (c5) and (d5) respectively. We can find that all segmentation results preserve some shape characteristics of their corresponding templates, but their performances differ significantly. 
The (a5) is the best, and (b5) is still acceptable, but (c5) and (d5) are far from accurate. That indicates our proposed method is highly sensitive to the given template. A precise template provides a perfect segmentation result, while an unsuitable template leads to a terrible one. On the other hand, it also implies that our model has a certain level of ability to understand prior information.

\begin{figure}
    \begin{center}
        \includegraphics[width=15.5cm]{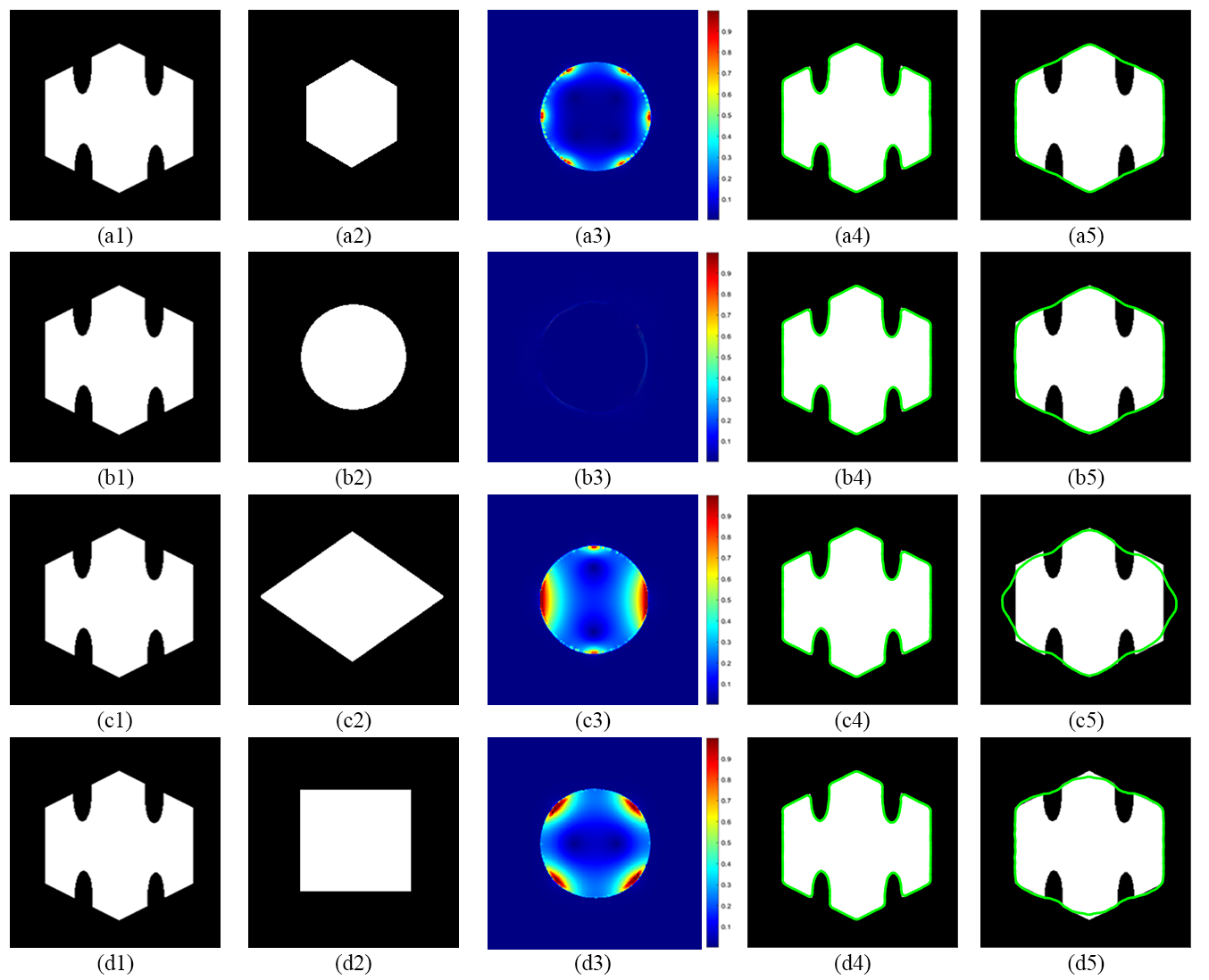}
    \end{center}
    \caption{Segmentation results under different templates. For each row, the sequence from left to right includes the original image, template shape, template's HBS, the segmentation result without HBS, and the segmentation result with HBS.}
    \label{fig: seg under diff temps}
\end{figure}

\subsection{Natural images}
The experimental results on synthetic images demonstrate that our model can effectively utilize prior information to segment partially damaged target objects. However, we are far from satisfied with this and have extended the model's application to real-world images.

\cref{fig: natural image} presents some results on natural images. As before, the original image, template, HBS, segmentation without HBS, and segmentation with HBS are sequentially displayed in columns 1 to 5 of each row. 

The first row (a1) is a grayscale image with a bear in grass. Without HBS, the model only relies on the intensity of grayscale values to determine the shape boundaries (a4), which results in the top boundary being limited to the brighter forehead and the bottom boundary appearing jagged due to the grass. When HBS is specified, even if the circle is not a suitable template, it helps our proposed HBS segmentation model generate a much smoother and more complete result (a5). The other three images are color images, which must be compressed RGB to grayscale value before applying our model. 
The result (b5) demonstrates that no matter how many pieces the target object is separated into, our model can segment it as a simply connected entity, just like it does on synthetic images. The experiment (c) represents a similar but more practical scenario, occlusion, and our model successfully and accurately identifies the white paper behind pens in (c5). In (d5), our model also locates the signboard partially covered under the dust.

\begin{figure}
    \begin{center}
        \includegraphics[width=15.5cm]{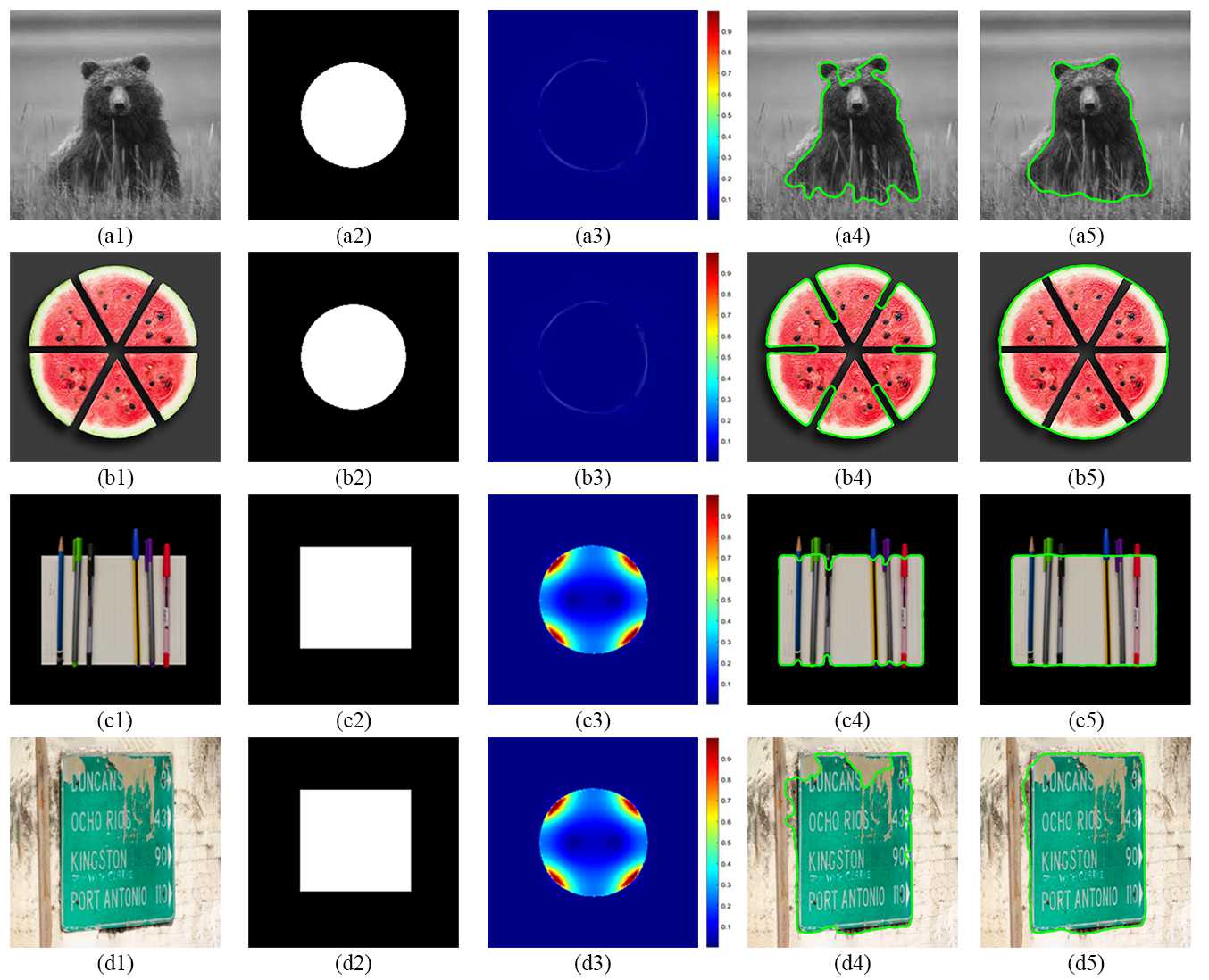}
    \end{center}
    \caption{Segmentation results of natural images. For each row, the sequence from left to right includes the original image, template shape, template's HBS, the segmentation result without HBS, and the segmentation result with HBS.}
    \label{fig: natural image}
\end{figure}

\subsection{Segmentation process}
The proposed method utilizes an iterative process to progressively deform an initial circular segmentation template into the final segmentation result, as described in \cref{section: mu subproblem}. This iterative deformation procedure was experimentally investigated, with results demonstrated in \cref{fig: seg process}. The segmentation progression on images of a bear and a brain is shown in the first and second rows, respectively. The initial circular template position is depicted in green in the first column. The intermediate temporary segmentation results after 5, 10, 15, and 20 iterations are displayed in green in the subsequent columns.

This experiment clearly exhibits the smooth deformation of the segmentation results across iterations, supporting the stability of the proposed iterative algorithm. Additionally, the convergence rate of the algorithm depends on several factors, including image complexity, resolution, HBS selection, step size, initial template position, and so on. For instance, the bear segmentation nears convergence after 15 iterations, while the brain segmentation still demonstrates significant differences between the 15th and 20th iterations. The algorithm typically converges within approximately 20 iterations under a suitable parameter setting, yielding satisfactory segmentation outcomes. 
In summary, this analysis of the iterative deformation process provides insights into the stability and convergence properties of the proposed HBS segmentation approach.

\begin{figure}
    \begin{center}
        \includegraphics[width=\textwidth]{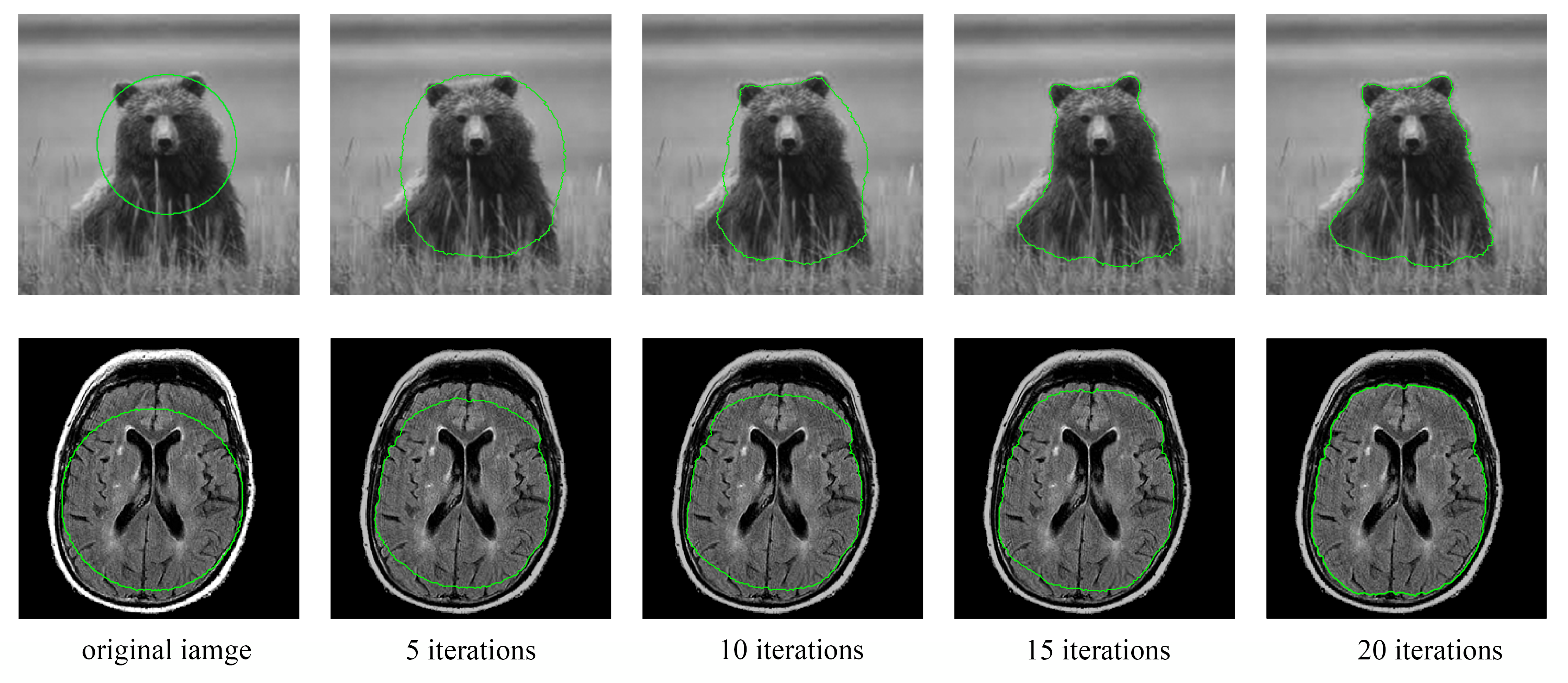}
    \end{center}
    \caption{The temporary segmentation results of the proposed model in different iterations. For each row, the green circle in the 1st column is the initial template, and the green curves in the 2nd to 5th columns are the segmentation results after 5, 10, 15, and 20 iterations.}
    \label{fig: seg process}
\end{figure}

\subsection{Impact of initial position}
Our model is a region-based segmentation method developed from the CV model. Thus, it shares common characteristics with algorithms of this category: it is sensitive to the initial position. In our model, the initial shape is fixed as a circle, and we need to set the center and radius to determine the initial position before the algorithm iteration begins. 

We conducted a detailed investigation into the impact of the initial position on the segmentation results, as illustrated in \cref{fig: init pos}. The image used for segmentation contains three circles, with the large white circle in the lower-left corner partially overlapping the smaller gray circle. In contrast, the third circle is far away in the upper-right corner. We try 11 different typical initial positions and obtain the corresponding results. 

As shown in (a), (b), and (c), when the initial circle is entirely within the target shape, we can achieve highly accurate segmentation results even in cases with overlapping regions. Meanwhile, (e), (f), and (g) indicate that when the initial circle contains the whole of the target region, the proposed model also generated reliable segmentation but may be affected by nearby objects. The inference of adjacent objects can be clearly observed in the comparison between (a2) and (e2), (b2) and (f2). In (i) and (j), we display the stable results under offsets of the initial circle. At the same time, we focus on the overlapping region in (k), whose segmented region (k2) does not grow continuously to fully encompass both circles, as is common in other algorithms. Instead, it is constrained by HBS and stops iterating, resulting in an outcome that does not conform to any edge. The segmented region of (d2) undergoes significant deformation and encompasses all three circles. However, the boundaries within the black background area still roughly resemble a circular shape, where such constraints of HBS are also evident. Finally, no meaningful segmentation is obtained if the initial position is set in an empty area as (h).
\begin{figure}
    \begin{center}
        \includegraphics[width=\textwidth]{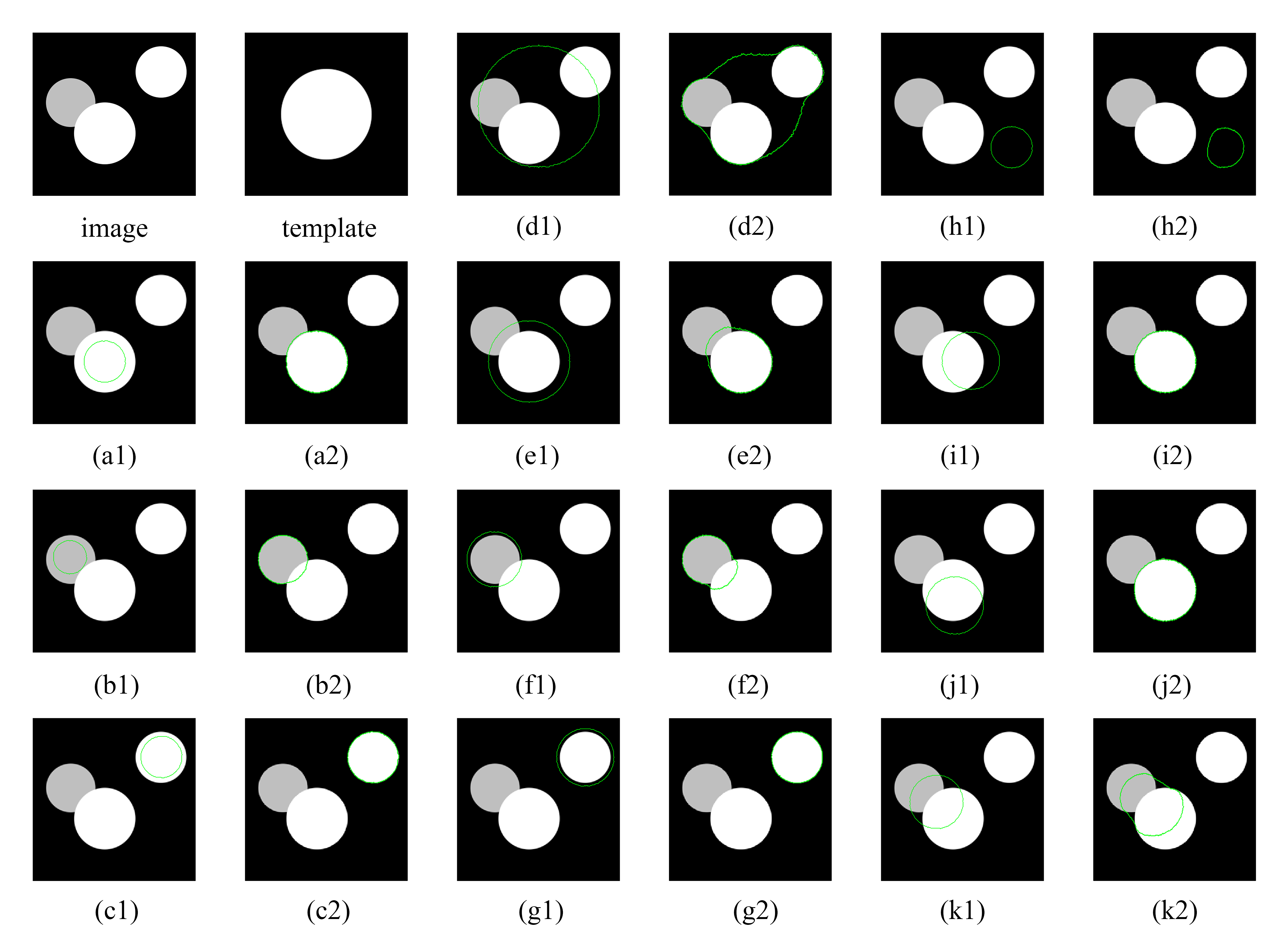}
    \end{center}
    \caption{The segmentation results under different initial positions. 11 sets of experimental results are presented, labeled from (a) to (k) in order. In each set of results, the green circle in the first image is the given initial position, while the green curve in the second image is the segmentation result.}
    \label{fig: init pos}
\end{figure}

In summary, the above experiments show that the proposed model can accurately locate the target regions based on the initial position. Additionally, we do not require precise initial position settings since the model is relatively insensitive and can accommodate reasonable offset and scaling of the initial circle. However, an inappropriate starting may yield an unexpected outcome. As mentioned earlier in \cref{section: exp: synthetic image}, the segmentation results can be influenced by adjacent objects, so avoiding other objects can help improve the segmentation performance.

\subsection{Segmentation on noised images}
Image degradation is ubiquitous in real-world computer vision applications, but segmentation of such low-quality images remains challenging. When algorithms falter in the presence of noise, their applicability diminishes substantially. Thus, advancing robust segmentation for corrupted images is essential. Shape priors could play a crucial role in this context by furnishing informative constraints during the processing pipeline. 

The following experiment demonstrates that our HBS segmentation allows for accurate analysis even when images are severely degraded by noise. 
In \cref{fig: noise seg}, we add Gaussian noise with mean $\mu$ and variance $\sigma$ to natural and synthetic images to simulate image degradation. The parameters of Gaussian noise and Signal to Noise Ratio(SNR) are marked under each subfigure. For the bear in 1st row, the clear original image and its segmentation result are displayed in \cref{fig: natural image} (a1) and (a5). For the split triangle in the 2nd row, the clear original image and segmentation result are (c1) and (c5) of \cref{fig: synthetic image}. 

It is evident that as the intensity of noise increases, the segmentation task becomes progressively more challenging, inevitably resulting in less smooth boundaries. However, leveraging HBS enables us to achieve relatively stable segmentation results. This effect is conspicuous in (a2) and (a3), where noticeable jagged edges emerge on both sides of the bear's head. 

Furthermore, the tolerance of our proposed algorithm to noise fluctuates depending on the complexity of the image.
For the split triangle, the segmentation performance remains good when $\sigma=0.08$ in (h). However, the algorithm can not handle $\sigma > 0.04$ effectively for bear.

\begin{figure}
    \begin{center}
        \includegraphics[width=\textwidth]{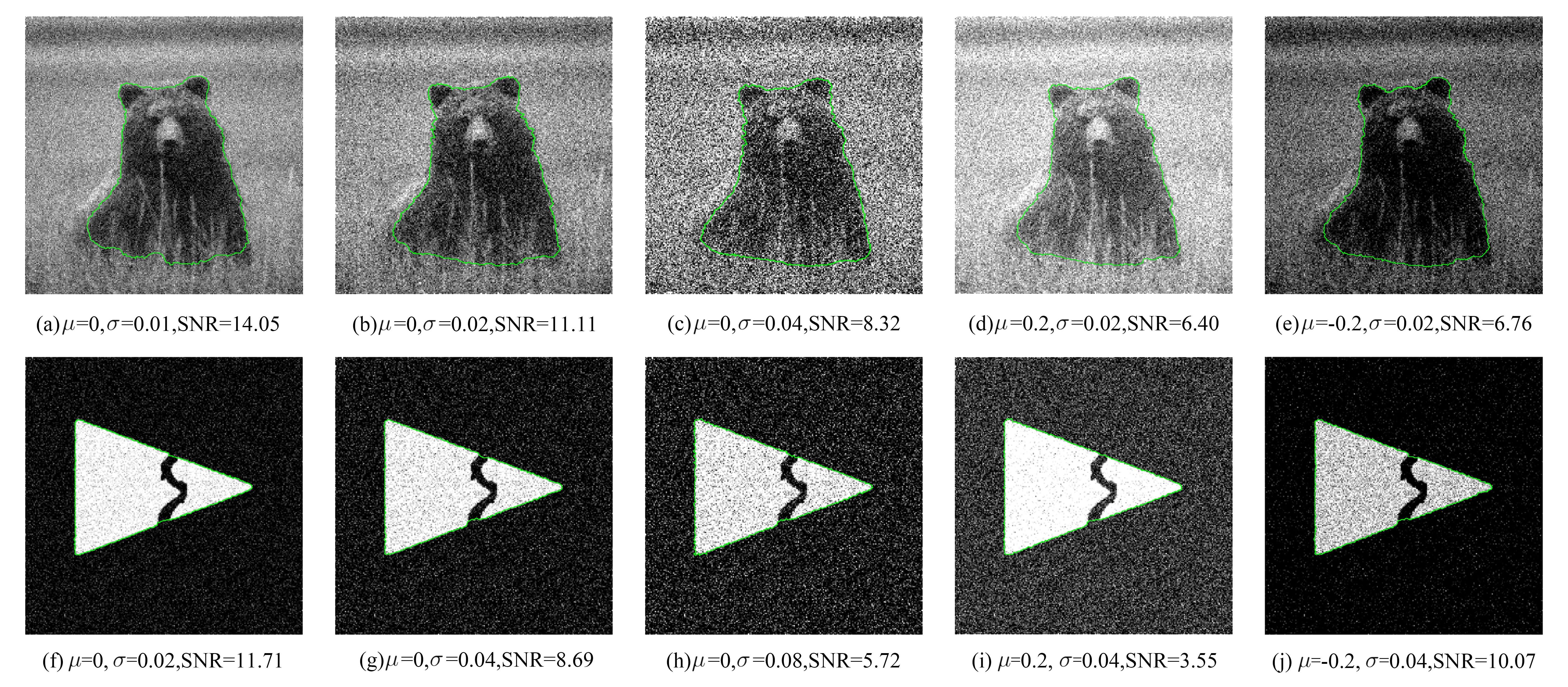}
    \end{center}
    \caption{Segmentation results on natural and synthetic images with Gaussian noise.}
    \label{fig: noise seg}
\end{figure}

\subsection{Enhancing object segmentation in multiple components scenes}
In reality, objects often have complex shapes with multiple components. For example, a house may feature a pentagonal outline, a triangular roof, rectangular doors, and circular windows. Accurately identifying and isolating the desired region amidst these intricate boundaries is challenging. The HBS effectively captures shape features and demonstrates strong object recognition capabilities, making it suitable for classification tasks. Therefore, using HBS to provide additional shape prior information offers a promising approach for segmenting specific areas in images with diverse components.

\begin{figure}
    \begin{center}
        \includegraphics[width=\textwidth]{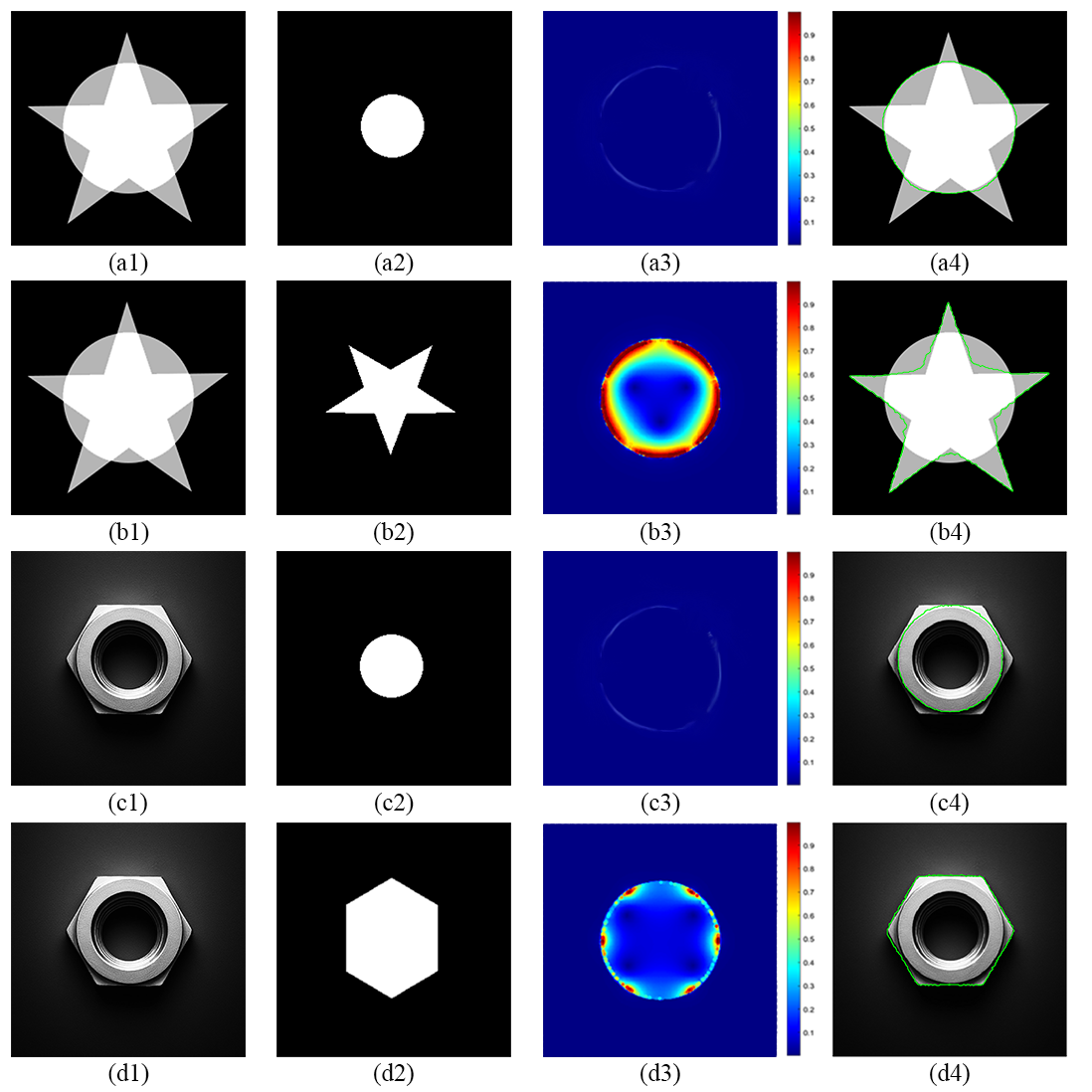}
    \end{center}
    \caption{The proposed model can obtain the corresponding segmentation results accurately according to the provided HBS when there are multiple composite shapes within the target region of the image. For each row, the sequence from left to right includes the original image, template shape, HBS of template, and segmentation result.}
    \label{fig: obj select}
\end{figure}

\cref{fig: obj select} is an attempt under situations described above and provides two sets of examples. The first set is tested on a synthetic image comprising a highly overlapping semi-transparent circle and pentagram. In this case, we aim to locate the circle and the pentagram, as shown in rows (a) and (b). The second set is tested on a hexagonal nut with an annular protrusion, and we attempt to segment the inner protrusion contour and the overall outline of the nut, as displayed in rows (c) and (d). The results (a4) to (d4) in the last column effectively demonstrate our model's ability, with the assistance of HBS, to accurately locate the target object in an image with multiple components according to the given requirements.

\subsection{Shape prior of object(s) with complicated boundary}
We have illustrated that shape prior can help segmentation, but obtaining shape prior remains a problem. In previous experiments, the templates fed into our model were always simple geometric shapes such as circles, triangles, etc. Although these simple shapes allow our model to achieve good results, they conceal the enormous potential of the HBS segmentation model.

While basic geometric shapes can be described using straightforward signatures like area, density, curvature, and circularity, their effectiveness diminishes as the complexity of the object increases. In such cases, traditional signatures struggle to capture the shape information fully, highlighting the capacity of the HBS to represent intricate shapes effectively. The HBS uses a simple and unified form to fully encode the characteristics of shapes, exhibiting a range of wonderful properties and enabling the reconstruction of the original shapes. Apart from extracting the shape boundaries, the HBS does not require any additional preprocessing to the image, which makes it extremely easy to find standard features from a large number of images of the same object class. 

In the medical field, physiological structures often exhibit specific shapes. Still, these shapes are complex and have a certain degree of variability, making their features hard to describe mathematically with clarity. \cref{fig: mean HBS and mean shape of brain} provides an example of computing shape prior of brains. (a1) to (a6) are the images of the brains with green lines as their boundary, (b1) to (b6) show their corresponding HBS, and (c1) to (c6) display the reconstructed brains from HBS. 
We can observe a small gap between the given brains in the 1st row and the reconstructed brains in the 3rd row, which is actually due to the resolution. However, it does not affect our segmentation since the reconstructed results are not used in our segmentation model at all. 
Besides, this gap will decrease when the resolution increases, as mentioned in \cite{linHarmonicBeltramiSignature2022}.
\begin{figure}
    \begin{center}
        \includegraphics[width=\textwidth]{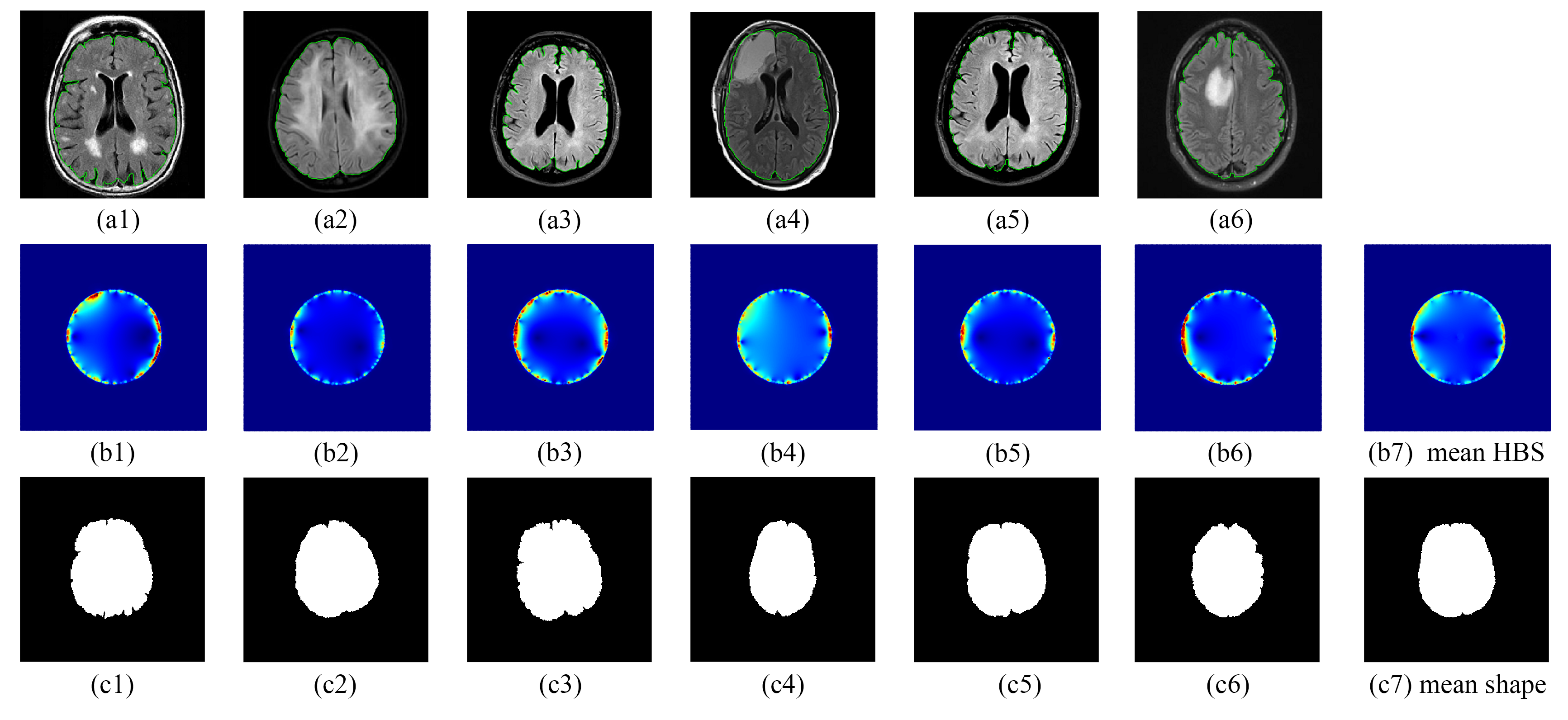}
    \end{center}
    \caption{HBS of brains and their mean HBS and mean shape.}
    \label{fig: mean HBS and mean shape of brain}
\end{figure}

Then we directly calculate the algebraic mean of the HBS of the above six brains, called the \textbf{mean HBS}, and also reconstruct a brain from the mean HBS, called the \textbf{mean shape}, which are respectively shown in the (b7) and (c7). Even though the relationship between HBS and the shape is challenging to explain, we can still observe the following features from (b1) to (b6):
\begin{enumerate}
    \item The majority of HBS values are small since brains appear roughly like circles.
    \item The HBS has larger values on the left side and smaller values on the right side, indicating a slightly tapering shape resembling the human brain.
    \item The HBS exhibits scattered high values along its boundary, corresponding to the sulcus in the brain's outline.
\end{enumerate}
The mean HBS also preserves these features, but the high values on the boundary become much less and are mainly concentrated at the left and right ends. This is reflected in the mean shape (c7), which has a smoother outline with distinct inward curvatures at the division between the left and right halves of the brain. Because of that, we believe that the mean HBS serves as a good shape prior.

It is noteworthy that utilizing the mean HBS as shape prior is not trivial, despite the fact that the mean HBS (b7) is exactly the HBS of the mean shape (c7). Achieving the mean shape directly poses challenges due to significant differences among shapes, such as variations in size, position, and angle. For example, prior to calculating the mean shape, brains (a1) to (a6) differ and are required to be realigned, resized, and oriented accordingly under traditional methods. Besides, all of the above operations rely on specific feature points, whose rationality and universality are also crucial factors to consider. In contrast, the HBS can circumvent these geometric discrepancies and capture potential shape features effectively. In summary, the mean HBS provides a generic and convenient method to extract shape prior with a solid theoretical background.

\subsection{Segmentation of complex image(s)}
With the help of the mean HBS, we can now apply our proposed segmentation model to complex images with suitable shape prior. 
Firstly, We take some new brains and the mean HBS in the last experiment as an example. 
For each row in \cref{fig: brain seg by mean HBS}, the 1st column is the given brain, the 2nd column is the segmentation result without HBS, the 3rd column is the segmentation result guided by the HBS of unit disk and the 4th column is the segmentation result guided by the mean HBS shown in (b7) of \cref{fig: mean HBS and mean shape of brain}.

\begin{figure}
    \begin{center}
        \includegraphics[width=15.5cm]{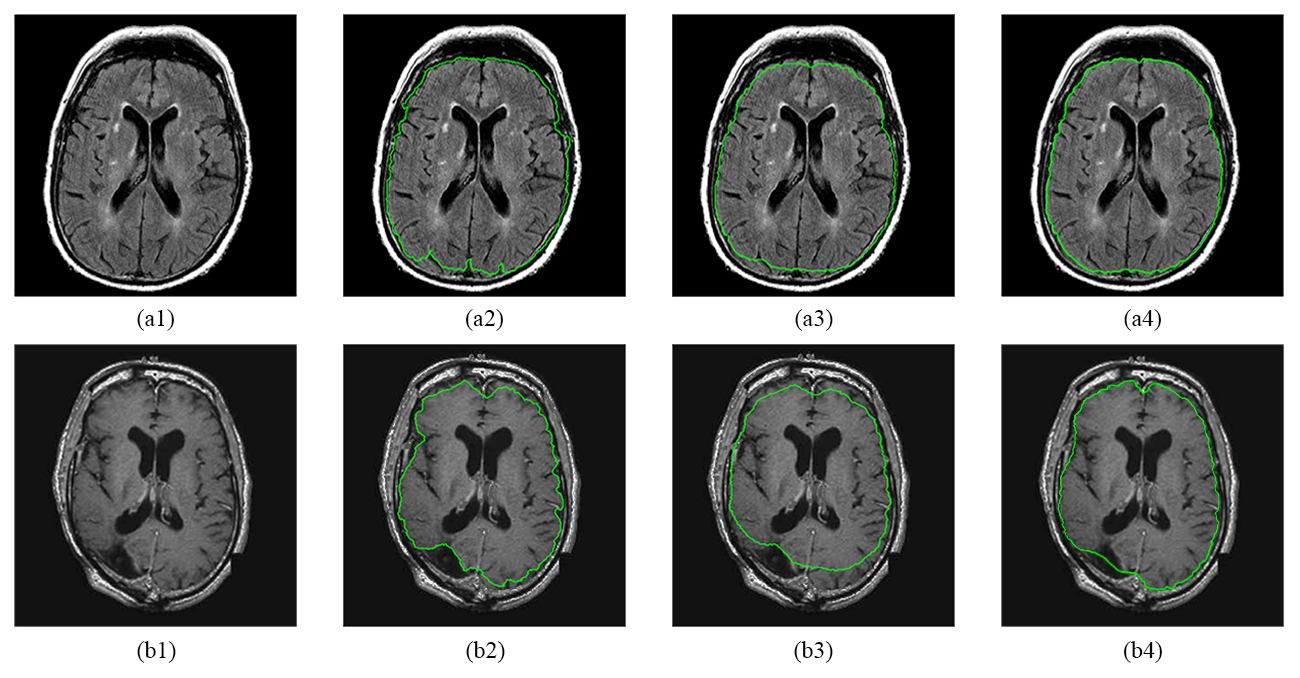}
    \end{center}
    \caption{Segmentation on brains guided by mean HBS. For each row, the sequence from left to right is the original image, the segmentation result without HBS, the segmentation result with HBS of the unit circle, and the segmentation result with mean HBS.}
    \label{fig: brain seg by mean HBS}
\end{figure}

In \cref{fig: brain seg by mean HBS}, the brain (a1) is quite similar to those used to compute mean HBS, (a1) to (a6) of \cref{fig: mean HBS and mean shape of brain}, leading that the result (a4) guided by mean HBS is almost perfect. The result (a3), guided by the HBS of the unit disk, loses some top and bottom regions, which makes it more circular. And the result (a2) without HBS has a very rough boundary. Due to some deep sulcus, it tends to discard regions with relatively discontinuous intensity, like the small part on the bottom of the brain.
The brain (b1) in the second row has a pathological region in the left bottom corner that deviates from our summarized standard features. Therefore, the segmentation result (b4) with mean HBS provides a relatively smooth and continuous contour while attempting to make up the missing region. This is precisely the desired outcome: areas affected by lesions or atrophy are still considered part of the brain. However, the other two results, (b2) and (b3), are not satisfactory enough and exhibit similar shortages with their corresponding ones in row (a).

We also conducted similar experiments on the brainstem and achieved favorable results. \cref{fig: mean HBS and mean shape of brainstem} displays the mean HBS (b6) extracted from (a1) to (a5), while \cref{fig: brainstem seg by mean HBS} illustrates the application of mean HBS for segmenting some new brainstem images (a1) and (b1).

\begin{figure}
    \begin{center}
        \includegraphics[width=\textwidth]{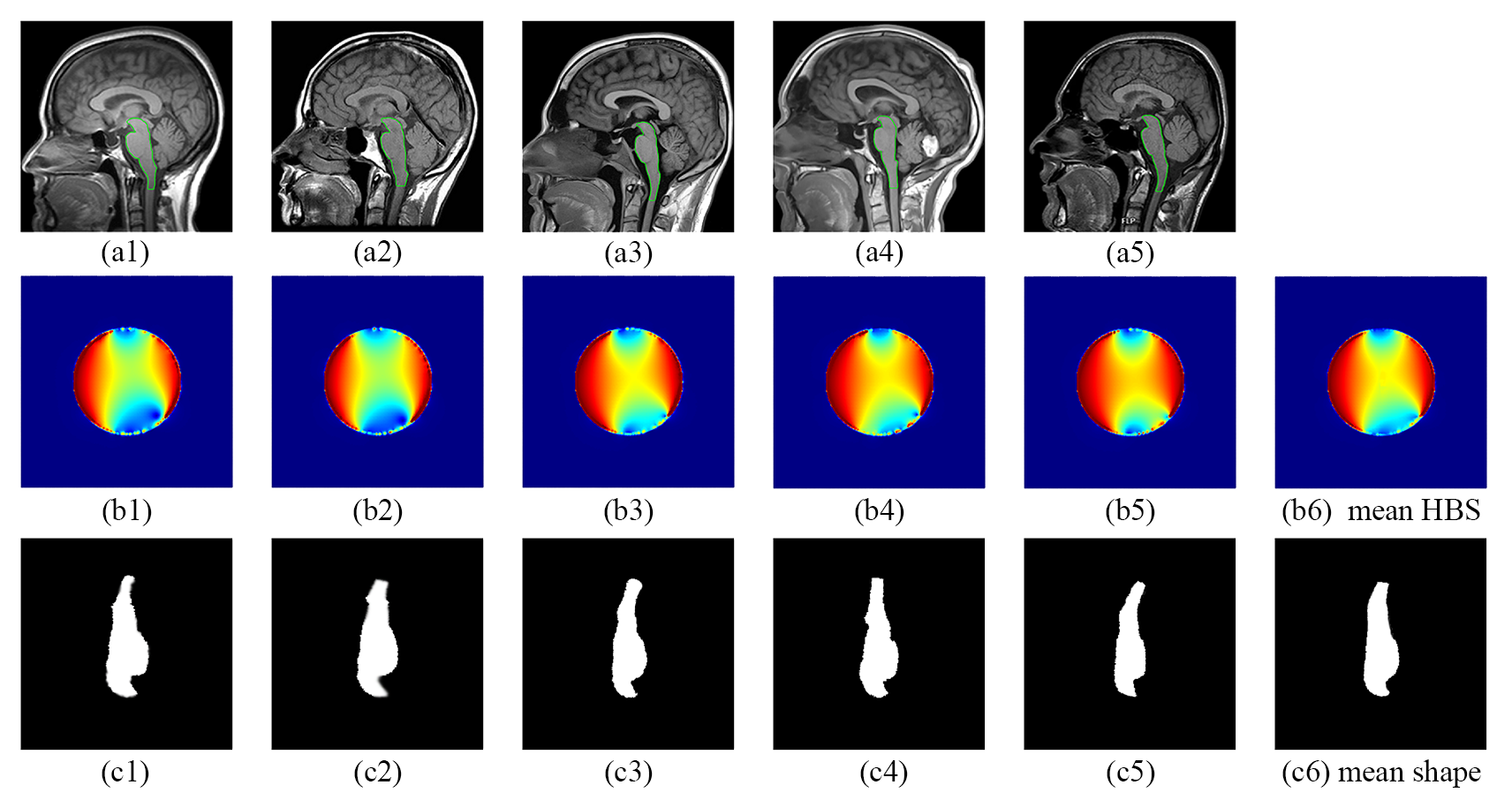}
    \end{center}
    \caption{HBS of brainstems and their mean HBS and mean shape.}
    \label{fig: mean HBS and mean shape of brainstem}
\end{figure}

\begin{figure}
    \begin{center}
        \includegraphics[width=8cm]{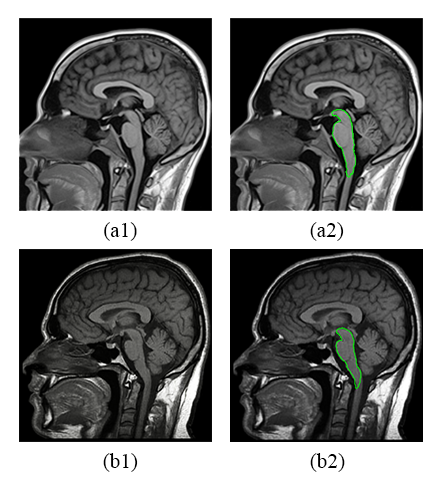}
    \end{center}
    \caption{Segmentation on the brainstem. For each row, the left is the original image, and the right is the result of our HBS segmentation model.}
    \label{fig: brainstem seg by mean HBS}
\end{figure}

\section{Conclusion}\label{section: conclusion}
This paper introduces a novel segmentation model leveraging HBS as shape prior. The key contributions and conclusions are summarized as follows:
\begin{enumerate}
    \item We integrated the HBS into the quasi-conformal segmentation framework as a shape prior term, where the HBS guides the segmentation towards shapes similar to the provided prior.
    \item The HBS prior eliminates requirements for alignment, resizing, and orientation of shapes, and our HBS segmentation model naturally inherits such properties.
    \item Experiments on synthetic and natural images exhibited substantial improvements over the baseline model without HBS, validating the benefits of HBS in dealing with image degradation, occlusion, low contrast, and high noise.
    \item Segmentation results demonstrated accurate object localization in multi-part images by selecting an appropriate HBS prior, showing strong shape-discerning capabilities.
    \item Using the mean HBS computed from many related shapes provides an effective technique for obtaining a suitable shape prior, which is very useful for segmenting complicated images.
\end{enumerate}

In conclusion, the HBS prior segmentation model provides a novel, efficient, and flexible strategy to incorporate shape knowledge into segmentation algorithms. Substantial performance gains verify the advantages of the HBS representation. This paper illustrates the significant potential of HBS-guided techniques to advance a wide range of shape analysis tasks. Future work may explore HBS-based priors for 3D shapes and integrate the model into deep neural networks.

\section*{Acknowledgement} This work is supported by HKRGC GRF (Project ID: 14307622).

\bibliographystyle{plain}       
\bibliography{paper}

\end{document}